\documentclass[11pt]{article}
\usepackage{graphicx,amsmath,amsfonts,amssymb,bm,hyperref,url,breakurl,epsfig,epsf,color,fullpage,MnSymbol,mathbbol,fmtcount,algorithmic,algorithm,semtrans,cite,caption,subcaption,
  multirow}

\usepackage[title]{appendix}

\usepackage{caption}
\usepackage[bottom,hang,flushmargin]{footmisc}
\usepackage{comment}
\usepackage[bottom]{footmisc}
\usepackage{caption}
\usepackage{subcaption}
\usepackage{enumitem}

\newcommand{\bi}{\begin{itemize}}
\newcommand{\ei}{\end{itemize}}
\newcommand{\bal}{\begin{align}}
\newcommand{\eal}{\end{align}}

\newcommand{\EE}{\mathbb{E}}
\newcommand{\PP}{\mathbb{P}}

\newcommand{\QQ}{\mathbb{Q}}

\newcommand{\bX}{\mathbf{X}}

\newcommand{\bY}{\mathbf{Y}}

\newcommand{\bW}{\mathbf{W}}

\newcommand{\by}{\mathbf{y}}

\newcommand{\bv}{\mathbf{v}}
\newcommand{\bc}{\mathbf{c}}
\newcommand{\bu}{\mathbf{u}}

\newcommand{\bA}{\mathbf{A}}

\newcommand{\bU}{\mathbf{U}}

\newcommand{\bS}{\mathbf{S}}

\newcommand{\bI}{\mathbf{I}}

\newcommand{\bmu}{\mathbf{\mu}}

\newcommand{\bH}{\mathbf{H}}

\newcommand{\bK}{\mathbf{K}}

\newcommand{\bG}{\mathbf{G}}
\newcommand{\bh}{\mathbf{h}}
\newcommand{\bg}{\mathbf{g}}
\newcommand{\tbG}{\tilde{\mathbf{G}}}
\newcommand{\tG}{\tilde{G}}
\newcommand{\hbG}{\hat{\bG}}

\newcommand{\cN}{\mathcal{N}}

\newcommand{\cO}{\mathcal{O}}

\newcommand{\cD}{\mathcal{D}}
\newcommand{\cS}{\mathcal{S}}

\newcommand{\cG}{\mathcal{G}}

\newcommand{\tY}{\tilde{Y}}

\newcommand{\tO}{\tilde{\cO}}

\newcommand{\hY}{\hat{Y}}

\newcommand{\hby}{\hat{\by}}

\newcommand{\bSigma}{\mathbf{\Sigma}}

\newcommand{\tbK}{\tilde{\mathbf{K}}}
\newcommand{\Tr}{\text{Tr}}

\newcommand{\grad}{\bigtriangledown}

\newcommand{\range}{\text{range}}

\def\<{\langle}
\def\>{\rangle}


\newtheorem{theorem}{\textbf{Theorem}}
\newtheorem{lemma}{\textbf{Lemma}}

\newtheorem{definition}{\textbf{Definition}}

\newtheorem{proof}{\textbf{Proof}}

\pagestyle{plain}
\title{{\huge Understanding GANs: the LQG Setting}}

\author{Soheil Feizi$^1$, Farzan Farnia$^2$,  Tony Ginart$^2$  and David Tse$^2$\\\\
$^1$ Department of Computer Science, University of Maryland, College Park\\
$^2$ Department of Electrical Engineering, Stanford University}
\date{}

\begin{document}
\maketitle

\begin{abstract}
Generative Adversarial Networks (GANs) have become a popular method to learn a probability model from data. In this  paper, we aim to provide an understanding of some of the basic issues surrounding GANs including their formulation, generalization and stability on a simple benchmark where the data has a high-dimensional Gaussian distribution. Even in this simple benchmark, the GAN problem has not been well-understood as we observe that existing state-of-the-art GAN architectures may fail to learn a proper generative distribution owing to (1) stability issues (i.e., convergence to bad local solutions or not converging at all), (2) approximation issues (i.e., having improper global GAN optimizers caused by inappropriate GAN's loss functions), and (3) generalizability issues (i.e., requiring large number of samples for training). In this setup, we propose a GAN architecture which recovers the maximum-likelihood solution and demonstrates fast generalization. Moreover, we analyze global stability of different computational approaches for the proposed GAN optimization and highlight their pros and cons. Finally, we outline an extension of our model-based approach to design GANs in more complex setups than the considered Gaussian benchmark. 
\end{abstract}

\section{Introduction}\label{sec:intro}
Learning a probability model from data is a fundamental problem in statistics and machine learning. Building off the success of deep learning, Generative Adversarial Networks (GANs) \cite{goodfellow2014generative} have given this age-old problem a face-lift.
In contrast to traditional methods of parameter fitting like maximum likelihood estimation, the GAN approach views the problem as a {\em game}  between a {\it generator} whose goal is to generate fake samples that are close to the real data training samples and a {\it discriminator} whose goal is to distinguish between the real and fake samples. The generator and the discriminator are typically implemented by deep neural networks. GANs have achieved impressive performance in several domains (e.g., \cite{ledig2016photo,reed2016generative}). However, training good GANs is still challenging and it is an active area to design GANs with better and more stable performance (e.g., \cite{arjovsky2017wasserstein,gulrajani2017improved,sanjabi2018solving} and Section \ref{sec:prior-work}).

GANs are typically designed without any explicit modeling of the data. Are they {\em universal} learning algorithms, i.e. can they learn a very wide range of data distributions? If not, what are their limits? Can better GANs be designed if we use an explicit model of the data? These are the questions we wish to explore in this paper.

GANs' evaluations are primarily done on real data, typically images. Although clearly valuable, such evaluations are often subjective owing to not having clear baselines to compare against. To make progress on the above questions, we report instead experiments of state-of-the-art GANs on {\em synthetic} data where clear baselines are known. We chose one of the simplest high-dimensional distributions: the Gaussian distribution.

\begin{figure}[t]
\centering
\includegraphics[width=0.8\linewidth]{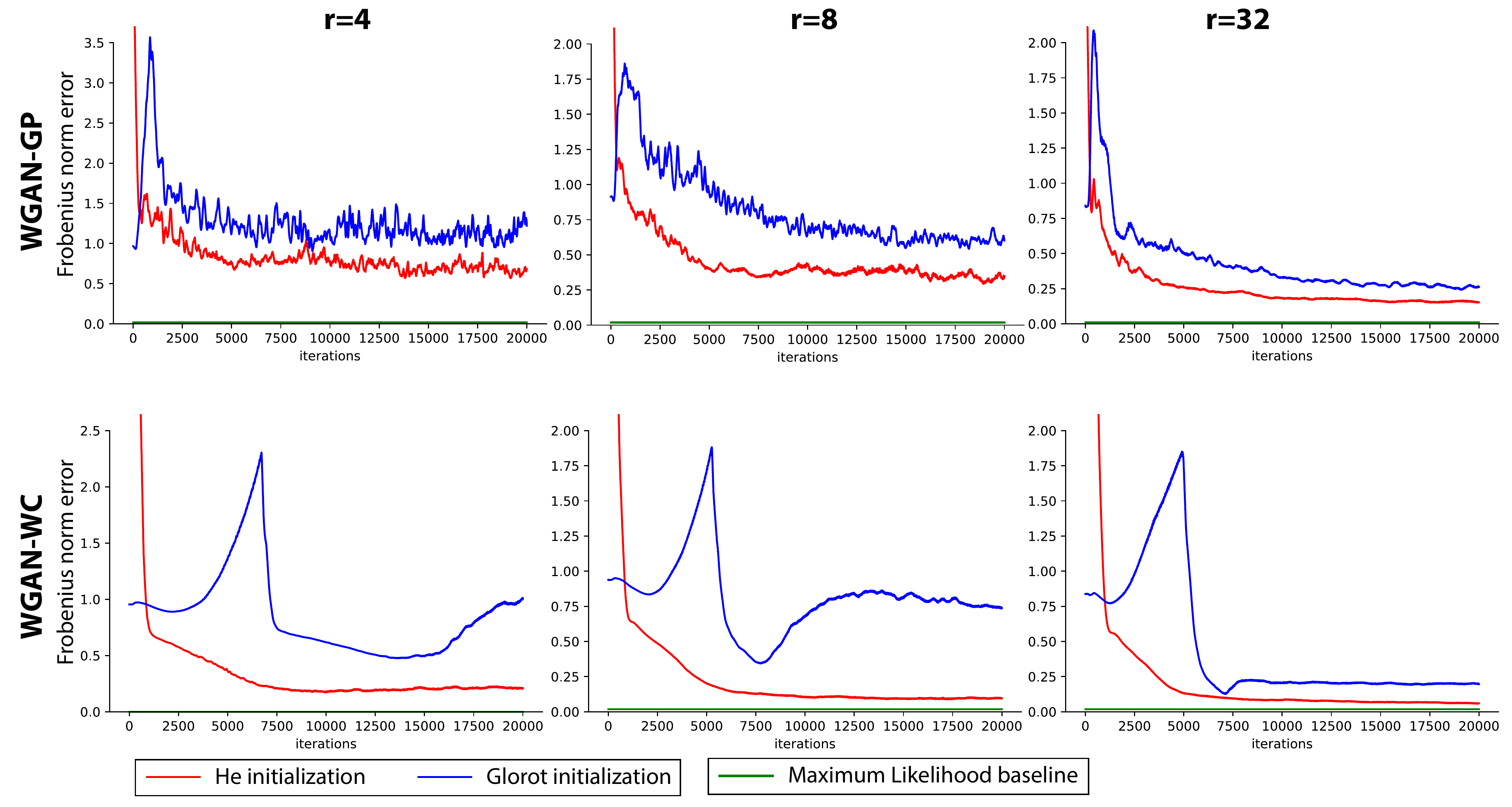}
\caption{An illustration of the performance of WGAN+GP and WGAN+WC in different values of $r$  (the dimension of the input randomness to the generator) with different initialization procedures when the generator and the discriminator functions are both neural networks.}
\label{fig:nonlinearG}
\end{figure}

\subsection{Experiments}
\begin{figure}[t]
\centering
\includegraphics[width=0.8\linewidth]{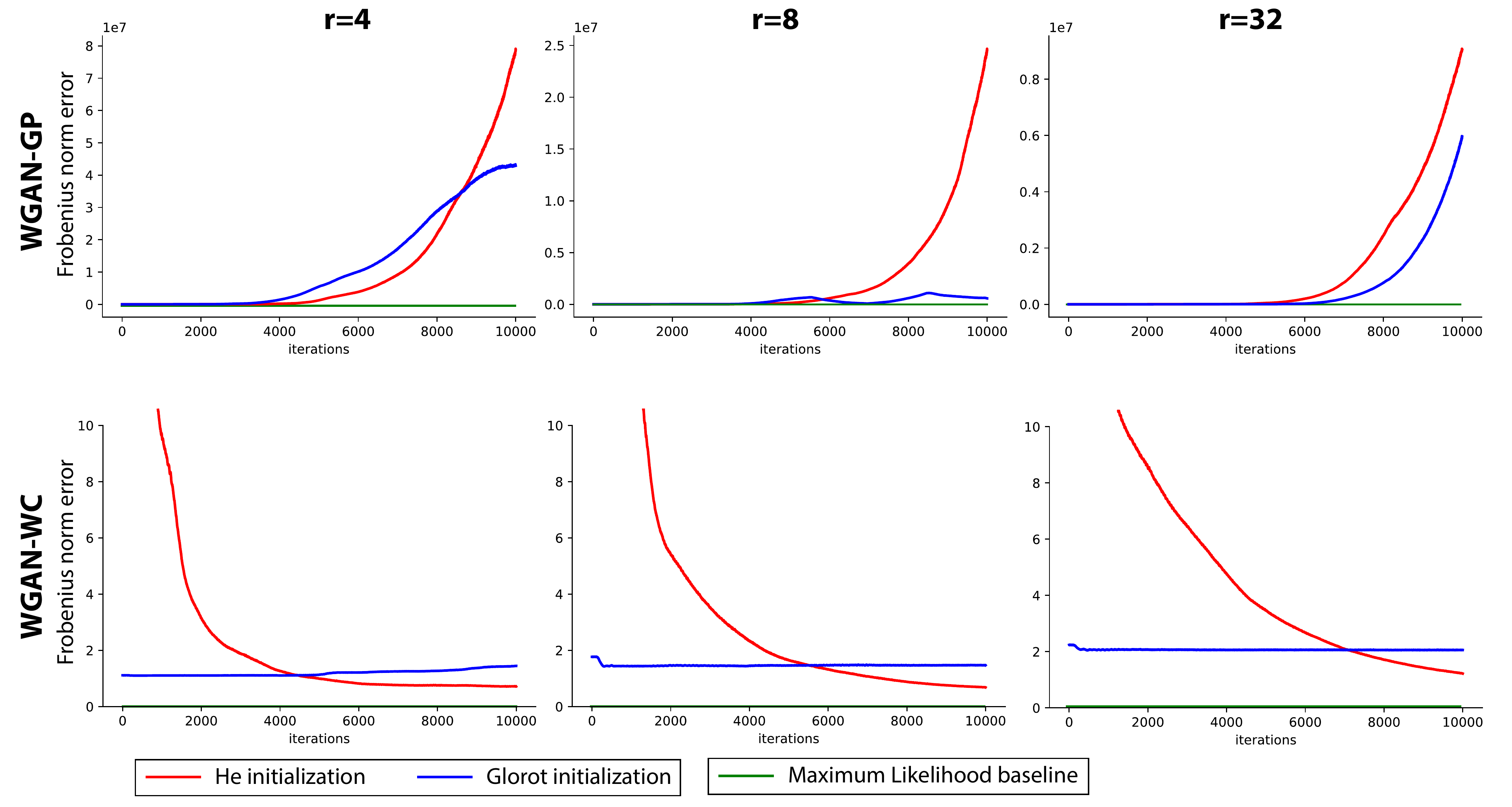}
\caption{A repeat of experiments of Figure \ref{fig:nonlinearG} using the ELU activation function instead of the ReLU activation. GAN's performance changes drastically.  Note the  range of the $y$-axes in the plots for WGAN-GP.}
\label{fig:ELU}
\end{figure}

In our first set of experiments, we generate $n=100,000$ samples from a $d=32$ dimensional Gaussian distribution $\cN(\mathbf{0},\bK)$ when $\bK$ is the normalized identity matrix; $\bK=\bI/\sqrt{d}$. We train two state-of-the-art GAN architectures in our experiments; WGAN+Weight Clipping (WGAN+WC) \cite{arjovsky2017wasserstein} and WGAN+Gradient Penalty (WGAN+GP) \cite{gulrajani2017improved}. We use the neural net generator and discriminator  with hyper-parameter settings as recommended in \cite{gulrajani2017improved}.  Each of the neural networks has three hidden layers, each with 64 neurons and ReLU activation functions. To evaluate GAN's performance, we compute the Frobenius norm between covariance matrices of observed and generative distributions.

Figure \ref{fig:nonlinearG} shows the performance of GANs for various values of $r$, the dimension of the randomness (i.e., input to the generator, for which we use the standard Gaussian randomness ) and for two random initializations for ReLU layers using the standard {\it He} \cite{he2015delving} and {\it Glorot} \cite{glorot2010understanding} procedures. In these experiments, we observe two types of instability in GAN's performance; oscillating behaviour (e.g., WGAN-GP, $r=4,8$) and convergence to different and bad local solutions. Even after $20,000$ training epochs, the error does not approach zero in most cases. We observe similar trends when we use a random covariance instead of the normalized identity matrix (Appendix Figure \ref{fig:nonlinearG-random}).

Next, we repeat these experiments using the ELU activation function \cite{clevert2015fast} instead of the ReLU activation. Since ELU activation is differentiable everywhere, we thought it may provide more stability. However, surprisingly GAN's performance becomes dramatically worst (Figure \ref{fig:ELU}). Reference \cite{gulrajani2017improved} has noticed similar behavior in another setting as well, but there was no explanation, and  we are still surprised that such a drastic phenomenon can happen even in such a simple Gaussian setting.

In our next set of experiments, we attempt to improve performance by restricting the generator to be linear, since both the observed data and the randomness come from Gaussian distributions. (The discriminator is still the ReLU neural network). Since the generator is linear, zero error cannot be achieved in the case of $r<d$. In this case, a natural baseline is the $r$-PCA of the sample covariance matrix (Appendix Section \ref{SMsec:exp}). GAN's performance improves compared to the case of nonlinear generator (Figure \ref{fig:linearG}). We do not observe oscillating behavior in WGAN+GP. However, we still observe convergence to different bad local solutions for both WGAN+GP and WGAN+WC. Unlike Figure \ref{fig:nonlinearG} where WGAN+WC was performing better than WGAN+GP, here the performance of WGAN+WC is significantly worst than that of the WGAN+GP. Also, unlike other cases, in WGAN+GP when $r=4$, the Glorot initialization achieves a smaller error than that of the He initialization. These results highlight sensitivity of state-of-the-art GANs even in a simple benchmark.

\begin{figure}[t]
\centering
\includegraphics[width=0.8\linewidth]{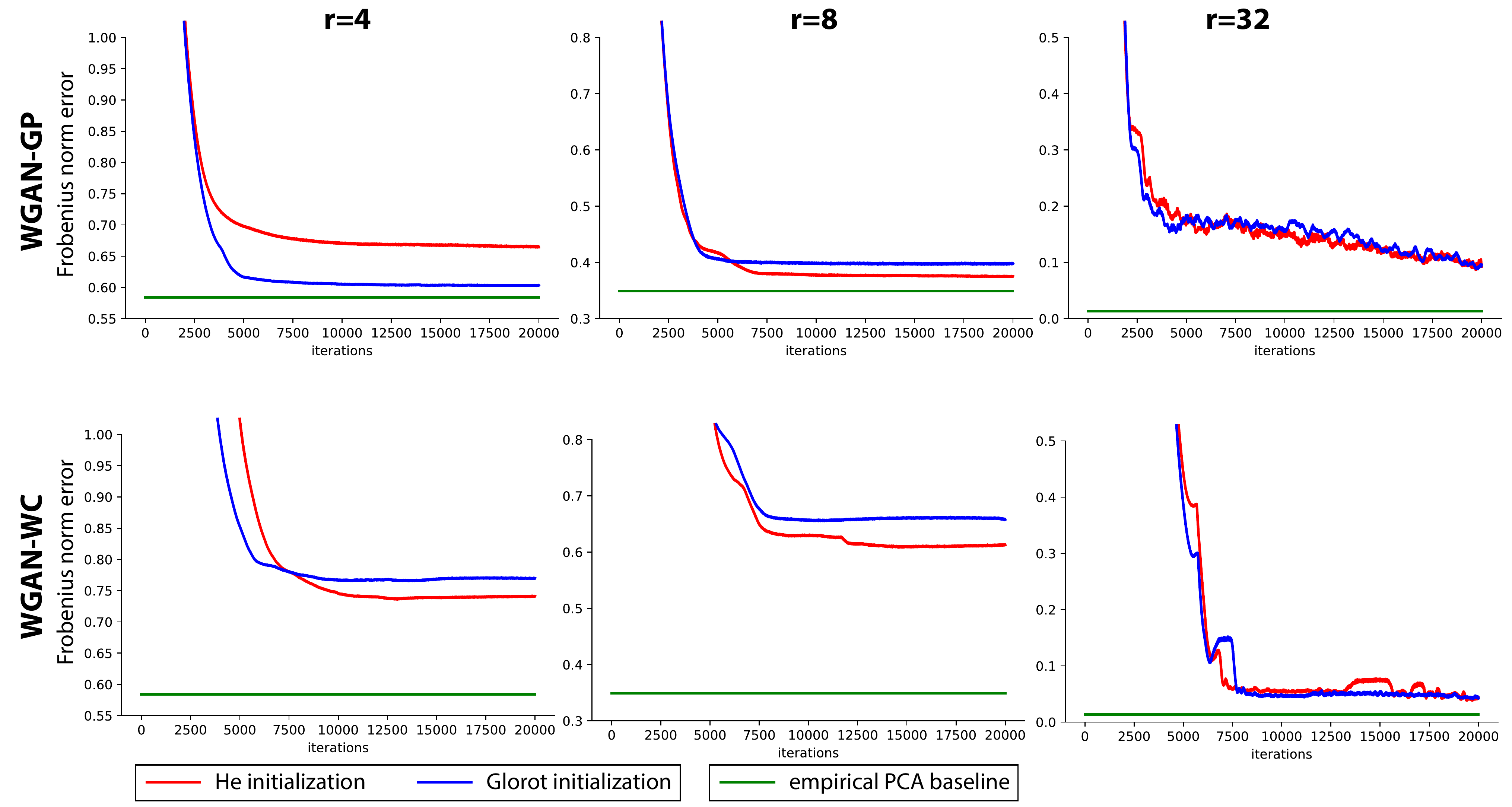}
\caption{A repeat of experiments of Figure \ref{fig:nonlinearG} when the generator function is linear. A random covariance matrix is chosen instead of the identity matrix.}
\label{fig:linearG}
\end{figure}

\subsection{Quadratic GAN}

The performance of state-of-the-art GANs on Gaussian data gives some evidence on the limitations of a model-free approach to the GAN design. We ask: is there is a natural GAN architecture for learning Gaussian data? We ask this question not because we are in desperate need of a new way of learning Gaussian distributions, but because we want to see how much gain a model-based approach can buy on this simple benchmark as well as hope to learn something in the process about designing GANs in general.


Figure \ref{fig:quadratic_gan}-a is our proposed GAN architecture for the Gaussian benchmark \footnote{For simplicity, we assume that samples have been centered to have zero means. In the general case, the generator should be an affine function.}. We refer to this architecture as {\it Quadratic GAN}.  Figure \ref{fig:quadratic_gan}-b compares performance of quadratic GAN and WGAN+GP for $r=32$. Quadratic GAN demonstrates stable behavior and much faster convergence to the maximum-likelihood baseline compared to WGAN. In fact, due to its simple structure, training of the Quadratic GAN takes less than $1$ second on a laptop CPU which is orders of magnitudes faster than training WGAN on a GPU.

We designed the Quadratic GAN in three steps: First, we formulated GAN's objective by specifying the appropriate loss  to naturally match the Gaussian model for the data  (Section \ref{sec:sec-formulation}). This allows us to show that the global population solution of the minmax problem is the $r$-PCA of the (true) covariance matrix of the Gaussian model (Theorem \ref{thm:PCA}). However, this initial architecture can have poor generalization performance (Appendix Section \ref{sec:generalization}). Second, we further constrained the discriminator to keep the good optimal solution of the population-optimal architecture while enabling fast generalization (Section \ref{sec:qgan}). We refer to this architecture as the quadratic GAN (Figure \ref{fig:quadratic_gan}). We show that the global optimizer of quadratic GAN applied on the empirical distribution is the {\em empirical} $r$-PCA (Theorem \ref{thm:eq-quality}). Finally, we study the global stability of different computational approaches for solving the proposed GAN architecture. In particular, we prove that in the full-rank case alternating gradient descent converges globally to the minmax solution, under some conditions (Section \ref{sec:stability}). 

\begin{figure}[t]
\centering
\includegraphics[width=0.9\linewidth]{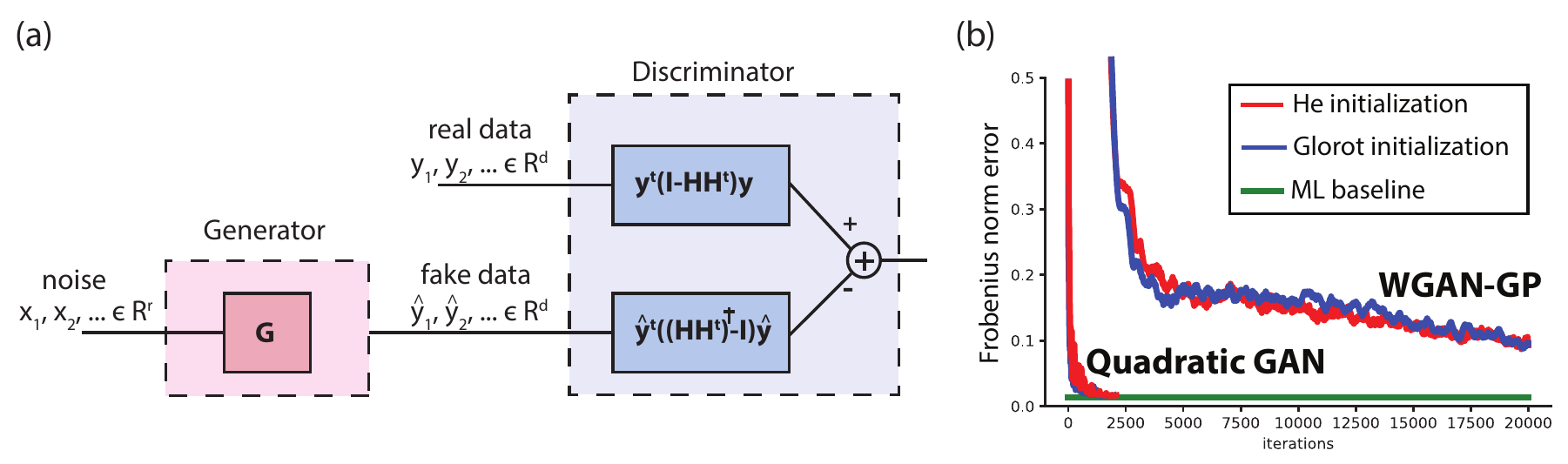}
\caption{(a) Quadratic GAN, with a linear generator and a quadratic discriminator. On the training data, the generator minimizes over the $d$ by $r$ matrix $G$  and the adversary maximizes over the $d$ by $d$ matrix $H$. (b) Performance comparison between quadratic GAN and WGAN+GP for $r=32$. }
\label{fig:quadratic_gan}
\end{figure}

\subsection{Prior Work}\label{sec:prior-work}
Broadly speaking, previous work in GANs study three main properties: (1) Stability where the focus is on the convergence of the commonly used alternating gradient descent approach to global/local optimizers (equilibriums) for GAN's optimization (e.g., \cite{nagarajan2017gradient,roth2017stabilizing,mescheder2017numerics,sanjabi2018solving,daskalakis2017training}, etc.), (2) Formulation where the focus is on designing proper loss functions for GAN's optimization (e.g., WGAN+Weight Clipping \cite{arjovsky2017wasserstein}, WGAN+Gradient Penalty \cite{gulrajani2017improved}, GAN+Spectral Normalization \cite{miyato2018spectral}, WGAN+Truncated Gradient Penalty \cite{petzka2017regularization}, relaxed WGAN \cite{guo2017relaxed}, $f$-GAN \cite{nowozin2016f}, MMD-GAN \cite{dziugaite2015training,li2015generative} , Least-Squares GAN \cite{mao2016multi}, Boundary equilibrium GAN \cite{berthelot2017began}, etc.), and (3) Generalization where the focus is on understanding the required number of samples to learn a probability model using GANs (e.g., \cite{arora2017generalization}). We address all three issues in the design of the Quadratic GAN.
Some references have also proposed model-based GANs for the Gaussian benchmark (\cite{nagarajan2017gradient,daskalakis2017training}). For example, \cite{daskalakis2017training} uses a quadratic function as the discriminator in the WGAN optimization. This design, however, does not recover the maximum likelihood/PCA solutions in the Gaussian benchmark, unlike the  Quadratic GAN. Moreover, no global stability results were proven.

\section{A General Formulation for GANs}\label{sec:sec-formulation}
Let $\{\by_i\}_{i=1}^{n}$ be $n$ observed data points in $\mathbb{R}^d$ drawn i.i.d. from the distribution $\PP_{Y}$. Let $\QQ_Y^n$ be the empirical distribution of these observed samples. Moreover, let $\PP_X$ be a normal distribution $\cN(\mathbf{0},\bI_r)$. GANs can be viewed as an optimization that minimizes a distance between the observed empirical distribution $\QQ_Y^n$ and the generated distribution $\PP_{\bG(X)}$. The {\it population} GAN optimization replaces $\QQ_Y^n$ with $\PP_Y$ and is the setting we focus on in this section. The question we ask in this section is: what is a natural way of specifying a loss function $\ell$ for GANs and how it determines the GAN's objective? We answer the question in general and then specialize to the Gaussian benchmark  by choosing an appropriate loss function for that case. We then show that we can get a good population solution under this loss function.

\subsection{WGAN Revisited}
Let us start with the WGAN optimization \cite{arjovsky2017wasserstein}:
\begin{align}\label{opt:w1-GAN}
\min_{\bG(.)\in\cG}~ W_1(\PP_Y,\PP_{\bG(X)}),
\end{align}
where $\cG$ is the set of generator functions, and the $p$-th order Wasserstein distance between distributions $\PP_{Z_1}$ and $\PP_{Z_2}$ is defined as \cite{villani2008optimal}
\begin{align}\label{eq:W2}
W_p^p(\PP_{Z_1},\PP_{Z_2}):=\min_{\PP_{Z_1,Z_2}} \EE\left[\|Z_1-Z_2\|^p\right],
\end{align}
where the minimization is over all joint distributions with marginals fixed. Replacing \eqref{eq:W2} in \eqref{opt:w1-GAN}, the WGAN optimization can be re-written as
\begin{align}\label{opt:unsup-emp-w1}
\min_{\bG(.)\in\cG}~ \min_{\PP_{\bG(X),Y}}~&\EE\left[\|Y-\bG(X)\|\right].
\end{align}
or equivalently:
\begin{align}\label{opt:unsup-emp-w1_2}
\min_{\PP_{X,Y}}~\min_{\bG(.)\in\cG}~&\EE\left[\|Y-\bG(X)\|\right],
\end{align}
where the  minimization is over all joint distributions $\PP_{X,Y}$ with fixed marginals $\PP_X$ and $\PP_Y$.

We now connect (\ref{opt:unsup-emp-w1_2}) to the {\em supervised learning} setup.
In supervised learning, the joint distribution  $\PP_{X,Y}$ is fixed and the goal is to learn a relationship between the feature variable represented by $X\in\mathbb{R}^{r}$, and the target variable represented by $Y\in\mathbb{R}^{d}$, according to the following optimization:
\begin{align}\label{opt:sup-emp}
\min_{\bG(.)\in \cG}~&\EE\left[\ell\left(Y,\bG(X)\right)\right],
\end{align}
where $\ell $ is the {\it loss} function. Assuming the marginal distribution of $X$ is the same in both optimizations \eqref{opt:unsup-emp-w1_2} and \eqref{opt:sup-emp}, we can connect the two optimization problems by setting $\ell(y,y')=\|y-y'\|$ in optimization \eqref{opt:sup-emp}. Note that for every fixed $\PP_{X,Y}$, the solution of the supervised learning problem (\ref{opt:sup-emp}) yields a predictor $g$ which is a feasible solution to the WGAN optimization problem \eqref{opt:unsup-emp-w1_2}.
Therefore, the WGAN optimization \eqref{opt:unsup-emp-w1} can be re-interpreted as solving the {\em easiest} such supervised learning problem, over all possible joint distributions $\PP_{X,Y}$ with fixed $\PP_X$ and $\PP_Y$.

\subsection{From Supervised to Unsupervised Learning}

GAN is a solution to an unsupervised learning problem. What we are establishing above is a general connection between supervised and unsupervised learning problems: a good predictor $\bG$ learnt in a supervised learning problem can be used to generate samples of the target  variable Y. Hence, to solve an unsupervised learning problem for $Y$ with distribution $\PP_Y$, one should solve the easiest supervised learning problem $\PP_{X,Y}$ with given  marginal $\PP_Y$ (and $\PP_X$, the randomness generating distribution).
This is in contrast to the traditional view of the unsupervised learning problem as observing the feature variable $X$ without the label $Y$. (Thus in this paper we break with tradition and use $Y$ to denote data  and $X$ as randomness for the generator in stating the GAN problem.)

This connection between supervised and unsupervised learning leads to a natural way of specifying the loss function in GANs: we simply replace the $\ell_2$ in (\ref{opt:unsup-emp-w1}) with a general loss function $\ell$:
\begin{align}\label{opt:gan-gen-loss}
\min_{\bG(.)\in\cG}~ \min_{\PP_{\bG(X),Y}}~&\EE\left[\ell\left(Y,\bG(X)\right)\right].
\end{align}
The inner optimization is the optimal transport problem between distributions of $\bG(X)$ and $Y$ \cite{villani2008optimal} with general cost $\ell$. This is a linear programming problem for general cost, so there is always a dual formulation (the Kantorovich dual \cite{villani2008optimal}). The dual formulation can be interpreted as a generalized discriminator optimization problem for the cost $\ell$. (For example, in the case of $\ell$ being the Euclidean norm, we get WGAN.)  Hence, we use (\ref{opt:gan-gen-loss}) as a formulation of GANs for general loss functions.

Note that an optimal transport view to GANs has been studied in other references (e.g., \cite{arjovsky2017wasserstein,liu2017approximation}). Our contribution in this section is to make a {\it connection} between supervised and unsupervised learning problems which we will exploit to specify a proper loss function for GANs in the Gaussian model.

\subsection{Quadratic Loss and Linear Generators}

The most widely used loss function in supervised learning is the quadratic loss: $\ell(y,y')=\|y-y'\|^2$ ({\em squared} Euclidean norm).  The quadratic loss has a strong synergy with the Gaussian model, as observed by Gauss himself. For example, under the Gaussian model and the quadratic loss in the supervised learning problem (\ref{opt:sup-emp}), the optimal $g$ is linear, thus forming a statistical basis for linear regression. Given the connection between supervised and unsupervised learning, we use this loss function for formulating the GAN for Gaussian data  .   This choice of the loss function leads to the following GAN optimization which we refer to as {\it W2GAN}:
\begin{align}\label{opt:w2gan}
\min_{\bG(.)\in\cG}~ W_2^2 (\PP_Y,\PP_{\bG(X)}).
\end{align}
A natural choice of $\cG$ is the set of all linear generators, from $\mathbb{R}^{r}$ to $\mathbb{R}^{d}$.

Since Wasserstein distances are weakly continuous measures in the probability space \cite{villani2008optimal}, similar to WGAN, the optimization of the W2GAN is well-defined even if $r<d$. The dual formulation (discriminator) for $W_2^2$ is  \cite{villani2008optimal}:
\begin{align}\label{opt:dual-w2}
W_2^2(\PP_Y,\PP_{\bG(X)})=\max_{\psi(.) \text{:convex}}&~ \EE\left[\|Y\|^2-2\psi(Y)\right]-\EE\left[2\psi^*(\bG(X))-\|\bG(X)\|^2\right],
\end{align}
where
\begin{align}\label{opt:conjugate}
\psi^*(\hby):=\max_{\bv} \left(\bv^t \hby-\psi(\bv)\right)
\end{align}
is the convex-conjugate of the function $\psi(.)$. Combining (\ref{opt:w2gan} )and (\ref{opt:dual-w2}), we obtain the minmax formulation of W2GAN:
\begin{align}
\label{eq:minmax_w2gan}
\min_{g \in \cG} \max_{\psi(.) \text{:convex}}&~ \EE\left[\|Y\|^2-2\psi(Y)\right]-\EE\left[2\psi^*(\bG(X))-\|\bG(X)\|^2\right].
\end{align}

\subsection{Population Solution: PCA}
There is a simple solution to the optimization problem (\ref{opt:w2gan}) in the population setting.

\begin{theorem}\label{thm:PCA}
Let $Y\sim \cN(\mathbf{0},\bK)$ where $\bK$ is full-rank. Let $X\sim \cN(\mathbf{0},\bI_r)$ where $r\leq d$. The optimal GAN solution in the population setting under linear generators $\cG$ is the $r$-PCA solution of $Y$.
\end{theorem}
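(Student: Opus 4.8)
The plan is to reduce the optimization over linear generators to a finite-dimensional problem over the generated covariance, and then solve it by an orthogonal decomposition together with a classical eigenvalue inequality, avoiding the closed-form trace-of-square-root expression for the Gaussian $W_2$ distance entirely. Since $X\sim\cN(\mathbf{0},\bI_r)$ and a linear generator has the form $\bG(X)=GX$ with $G\in\RR^{d\times r}$, the generated law is $\PP_{GX}=\cN(\mathbf{0},GG^t)$; as $G$ ranges over $\RR^{d\times r}$, the matrix $\bSigma:=GG^t$ ranges over all positive semidefinite $\bSigma\succeq 0$ with $\rank(\bSigma)\le r$. Hence the W2GAN objective \eqref{opt:w2gan} equals $\min_{\bSigma\succeq 0,\,\rank(\bSigma)\le r} W_2^2(\cN(\mathbf{0},\bK),\cN(\mathbf{0},\bSigma))$, and it suffices to identify the minimizing $\bSigma$.

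First I would lower-bound the objective using the primal coupling definition \eqref{eq:W2}. Fix an admissible $\bSigma$, let $Z\sim\cN(\mathbf{0},\bSigma)$, and let $\bP$ denote the orthogonal projection onto $\range(\bSigma)$, a subspace of dimension $\rank(\bSigma)\le r$; then $(\bI-\bP)Z=0$ almost surely, so $Z=\bP Z$. For any coupling of $(Y,Z)$, the vectors $(\bI-\bP)Y$ and $\bP Y-Z$ lie in orthogonal subspaces, so pointwise
\begin{align}
\|Y-Z\|^2=\|(\bI-\bP)Y\|^2+\|\bP Y-Z\|^2\ge\|(\bI-\bP)Y\|^2.
\end{align}
Taking expectations and minimizing over couplings gives $W_2^2(\cN(\mathbf{0},\bK),\cN(\mathbf{0},\bSigma))\ge\EE\|(\bI-\bP)Y\|^2=\Tr((\bI-\bP)\bK)=\Tr(\bK)-\Tr(\bP\bK)$, a bound depending on $\bSigma$ only through the projection $\bP$ of rank at most $r$.

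Next I would optimize this bound over $\bP$. Maximizing $\Tr(\bP\bK)$ over orthogonal projections of rank at most $r$ is exactly Ky Fan's maximum principle: writing $\bK=\bU\bLambda\bU^t$ with $\lambda_1\ge\cdots\ge\lambda_d>0$, the maximum is $\sum_{i=1}^r\lambda_i$, attained by $\bP^\star=\bU_r\bU_r^t$, the projection onto the top $r$ eigenvectors (lowering the rank only decreases $\Tr(\bP\bK)$, so rank exactly $r$ is optimal). This yields $\min_{\bSigma} W_2^2\ge\sum_{i=r+1}^d\lambda_i$. For matching achievability I would take $\bSigma^\star=\bP^\star\bK\bP^\star=\bU_r\bLambda_r\bU_r^t$, realizable by $G=\bU_r\bLambda_r^{1/2}$; then $Z^\star\sim\cN(\mathbf{0},\bSigma^\star)$ has the same law as $\bP^\star Y$, so the deterministic coupling $Z^\star=\bP^\star Y$ makes the second term above vanish and attains the lower bound. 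Therefore the optimal generated covariance is $\bU_r\bLambda_r\bU_r^t$, i.e. the generator retains exactly the top $r$ principal components of $Y$, which is the $r$-PCA solution.

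The supporting facts (Gaussian marginals of linear images, cyclicity of the trace, Ky Fan) are standard; the step needing the most care is the orthogonal splitting of the transport cost under an \emph{arbitrary} coupling, which is the crux that converts the Wasserstein minimization into the familiar PCA/eigenvalue problem.
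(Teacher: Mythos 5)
Your proof is correct and follows essentially the same route as the paper's: your pointwise orthogonal splitting $\|Y-Z\|^2=\|(\bI-\bP)Y\|^2+\|\bP Y-Z\|^2$ under an arbitrary coupling is exactly the decomposition used in the paper's Lemma \ref{lem:proj}, and your Ky Fan step simply makes explicit the paper's concluding reduction to maximizing $\EE\left[\|Y_{\cS}\|^2\right]$ over $r$-dimensional subspaces. The only difference is presentational: you certify achievability via the explicit deterministic coupling $Z^\star=\bP^\star Y$ with covariance $\bU_r\bLambda_r\bU_r^t$, where the paper instead invokes Gaussianity of the projection to conclude it is realizable by a linear generator.
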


We say $\hY$ is the $r$-PCA solution of $Y$ if $\bK_{\hY}$ is a rank $r$ matrix whose top $r$ eigenvalues and eigenvectors are the same as top $r$ eigenvalues and eigenvectors of $\bK$. This theorem is satisfactory as it connects GANs to PCA, one of the most basic unsupervised learning methods.

\section{Quadratic GAN} \label{sec:qgan}

The discriminator of the W2GAN optimization \eqref{eq:minmax_w2gan} is constrained over all convex functions. Since this set is non-parametric, we are unable to use gradient descent to compute a solution for this optimization. Moreover, having such a large feasible set for the discriminator function can cause poor {\em generalization}.

Consider the empirical version of the population W2GAN optimization problem (\ref{opt:w2gan}):
\begin{align}\label{opt:w2gan_emp}
\min_{\bG(.)\in\cG}~ W_2^2 (\QQ^n_Y,\PP_{\bG(X)}),
\end{align}
where $\QQ^n_Y$ is the empirical distribution of the $n$ data points $\{\by_i\}_{i=1}^{n}$. Let $g^*_n$ be the optimal solution of this problem. The distance between the generated distribution $\bG^*(X)$ and the true distribution $\PP_Y$, $W^2_2(\PP_Y, \PP_{g^*_n(X)})$, converges to zero as $n \rightarrow \infty$.  It was shown in \cite{arora2017generalization} that if the generator class $\cG$ is rich enough so that the generator can memorize the data and generate the empirical distribution $\QQ_Y^n$ itself, then this rate of convergence is very slow, of the order of $n^{-2/d}$. (Strictly speaking, they have only shown it for the $W_1$ distance, but a very similar result holds for $W_2$ as well.) This is because the empirical distribution $\QQ_Y^n$ converges very slowly to the true distribution $\PP_Y$ in the $W_2$ distance. Hence, the number of samples required for convergence is exponential in the dimension $d$. In Appendix Section \ref{sec:generalization}, we show that in our Gaussian setup, even if we constrain the generators to single-parameter linear functions that can generate the true distribution, the rate of convergence is still $n^{-2/d}$. 

Therefore, to overcome the generalization issue, the only option is to further constrain the {\em discriminator}.
Ideally one would like to {\it properly} constrain the discriminator function such that any population solution of the constrained optimization is a population solution of the original optimization and vice versa, while at the same time allowing fast generalization. In this section, we show how we can achieve this goal for the Gaussian benchmark. This view can potentially be extended to more complex distributions as we explain in Section \ref{sec:beyond}.

The following lemma characterizes the optimal solution of optimization \eqref{opt:dual-w2} \cite{chernozhukov2017monge}:
\begin{lemma}\label{lem:det-coupling}
Let $\PP_Y$ be absolutely continuous whose support contained in a convex set in $\mathbb{R}^d$. For a fixed $\bG(.)\in\cG$, let $\psi^{opt}$ be the optimal solution of optimization \eqref{opt:dual-w2}. This solution is unique. Moreover, we have
\begin{align}\label{eq:deterministic-mapping}
\grad \psi^{opt}(Y)\stackrel{\text{dist}}{=}\bG(X),
\end{align}
where $\stackrel{\text{dist}}{=}$ means matching distributions.
\end{lemma}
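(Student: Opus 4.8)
The plan is to recognize the inner maximization \eqref{opt:dual-w2} as the Kantorovich dual of the quadratic-cost optimal transport problem between $\PP_Y$ and $\PP_{\bG(X)}$, and then to invoke Brenier's theorem, which identifies the optimal transport map for this cost as the gradient of a convex potential. The reference \cite{chernozhukov2017monge} supplies the needed machinery; the real content is translating its conclusions into the present notation and matching the dual optimizer to the Brenier potential.

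First I would rewrite the objective by expanding $\|Y - \bG(X)\|^2 = \|Y\|^2 + \|\bG(X)\|^2 - 2\langle Y, \bG(X)\rangle$, so that
\[
W_2^2(\PP_Y, \PP_{\bG(X)}) = \EE[\|Y\|^2] + \EE[\|\bG(X)\|^2] - 2\max_{\pi} \EE_{\pi}[\langle Y, \bG(X)\rangle],
\]
where $\pi$ ranges over couplings with marginals $\PP_Y$ and $\PP_{\bG(X)}$. The Kantorovich dual of this correlation maximization is $\min_{\psi \text{ convex}} \left( \EE[\psi(Y)] + \EE[\psi^*(\bG(X))] \right)$, with $\psi^*$ the conjugate defined in \eqref{opt:conjugate}; substituting this back reproduces \eqref{opt:dual-w2} term by term, so the maximizer $\psi^{opt}$ of \eqref{opt:dual-w2} is precisely the optimal Kantorovich potential.

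Next I would apply Brenier's theorem. Since $\PP_Y$ is absolutely continuous with support in a convex set, the Monge problem for the quadratic cost admits a $\PP_Y$-a.e. unique solution $T$ pushing $\PP_Y$ forward to $\PP_{\bG(X)}$, and this $T$ is of the form $\grad\psi$ for a convex $\psi$. The last step is to identify this Brenier potential with the dual optimizer: on the support of the optimal coupling the Fenchel--Young inequality $\psi(y) + \psi^*(z) \ge \langle y, z\rangle$ holds with equality, which forces $z = \grad\psi(y)$. Taking $\psi = \psi^{opt}$ and pushing forward along $Y \sim \PP_Y$ yields $\grad\psi^{opt}(Y) \stackrel{\text{dist}}{=} \bG(X)$, which is exactly \eqref{eq:deterministic-mapping}.

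The main obstacle is precisely this last identification: showing that the dual optimizer's gradient \emph{is} the transport map, rather than merely asserting existence of some convex potential. This rests on the complementary-slackness (Fenchel--Young equality) condition holding on the support of the optimal plan, together with differentiability of $\psi^{opt}$ at $\PP_Y$-almost every point, which is where absolute continuity of $\PP_Y$ is essential. Absolute continuity also delivers the claimed uniqueness: $\grad\psi^{opt}$ is determined $\PP_Y$-a.e., so $\psi^{opt}$ is unique up to an additive constant that leaves \eqref{eq:deterministic-mapping} unaffected.
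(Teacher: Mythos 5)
Your argument is correct and coincides with the intended justification: the paper offers no proof of this lemma, quoting it from \cite{chernozhukov2017monge}, and your chain --- Kantorovich duality for the quadratic cost, Brenier's theorem under absolute continuity of $\PP_Y$, and Fenchel--Young equality on the support of the optimal plan to identify $\grad \psi^{opt}$ with the transport map --- is exactly the standard derivation underlying that citation, including the step you rightly flag as the real content (that the \emph{dual optimizer's} gradient, not merely some convex potential, realizes the coupling). Your sharpening of the uniqueness claim is in fact necessary rather than optional: the objective in \eqref{opt:dual-w2} is invariant under $\psi \mapsto \psi + c$, so the optimizer is unique only up to an additive constant, with $\grad \psi^{opt}$ determined $\PP_Y$-a.e., which is the precise sense in which the lemma's ``unique'' should be read and suffices for \eqref{eq:deterministic-mapping}.
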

In our benchmark setup, since $\bG(X)$ is Gaussian, $\grad \psi^{opt}$ is a linear function. Thus, without loss of generality, $\psi(.)$ in the discriminator optimization can be constrained to quadratic functions of the form $\psi(\by)=\by^t \bA \by/2$ where $\bA$ is positive semidefinite. For the quadratic function, we have $\psi^*(\hby)=\hby^t \bA^{\dagger} \hby/2$ when $\range(\hY)\subseteq \range(\bA)$.

Replacing these in optimization \eqref{eq:minmax_w2gan}, we obtain:
\begin{align}\label{opt:quadratic-GAN}
\min_{\bG}~\max_{\bA \succeq 0}~ &\EE\left[ Y^t (\bI-\bA)Y \right]- \EE\left[ \hY^t (\bA^{\dagger}-\bI) \hY \right]\\
&\range(\bG)\subseteq \range(\bA).\nonumber
\end{align}
Without loss of generality, we can replace the constraint $\range(\bG)\subseteq \range(\bA)$ with $\range(\bG)= \range(\bA)$. It is because for a given $\bA$, this increases the size of the feasible set for $\bG$ optimization, thus the objective can achieve a smaller value. For a given $\bG$, one can decompose $\bA$ as $\bA_{1}+\bA_{2}$ where $\range(\bA_1)=\range(\bG)$ and $\range(\bA_2)\cap \range(\bG)=\emptyset$. Note that by ignoring $\bA_2$, the objective function does not decrease. Therefore, optimization \eqref{opt:quadratic-GAN} can be written as
\begin{align}\label{opt:quadratic-GAN-eq}
\min_{\bG}~\max_{\bA \succeq 0}~ &\EE\left[ Y^t (\bI-\bA)Y \right]- \EE\left[ \hY^t (\bA^{\dagger}-\bI) \hY \right]\\
&\range(\bG)= \range(\bA).\nonumber
\end{align}
Using the fact that trace is invariant under cyclic permutations and by replacing $\bA=\bH \bH^t$, the objective function of the above optimization can be re-written as:
\begin{align}\label{eq:quadratic-gan-objective}
J(\bG,\bH)=&\Tr\left[ \left(\bI-\bH\bH^t\right) \bK \right]- \Tr\left[\left(\left(\bH\bH^t\right)^{\dagger}-\bI\right) \bG \bG^t \right].
\end{align}
In practice, we apply GANs to the observed data (i.e., the empirical distribution). In that case, in the above objective function, $\bK$ (the true covariance) should be replaced by $\tbK$ (the empirical covariance). This leads to the {\em quadratic GAN} optimization:
\begin{align}\label{opt:quadratic-gan-empirical}
\min_{\bG}~\max_{\bH}~ &\Tr\left[ \left(\bI-\bH\bH^t\right) \tbK \right]- \Tr\left[\left(\left(\bH\bH^t\right)^{\dagger}-\bI\right) \bG \bG^t \right]\\
&\range(\bG)= \range(\bH).\nonumber
\end{align}
Note that since the global optimizer of optimization \eqref{eq:quadratic-gan-objective} is PCA (Theorem \ref{thm:PCA}), the global optimizer of optimization \eqref{opt:quadratic-gan-empirical} is empirical PCA:

\begin{theorem}\label{thm:eq-quality}
Let $\tilde{\bK}_r$ be the $r$-PCA of the sample covariance matrix. Let $(\bG^*,\bH^*)$ be a global solution for the quadratic GAN optimization \eqref{opt:quadratic-gan-empirical}. Then, we have $\bG^* (\bG^*)^t=\tilde{\bK}_r$. I.e., quadratic GAN recovers the empirical PCA solution as the generative model.
\end{theorem}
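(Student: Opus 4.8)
The plan is to exploit the fact that the quadratic GAN objective \eqref{eq:quadratic-gan-objective} depends on the data distribution only through its second-moment matrix, and then to inherit the conclusion directly from Theorem \ref{thm:PCA}. The central observation is that the empirical optimization \eqref{opt:quadratic-gan-empirical} is, verbatim, the population problem $\min_{\bG}\max_{\bH} J(\bG,\bH)$ of \eqref{eq:quadratic-gan-objective} with the true covariance $\bK$ replaced by the empirical covariance $\tbK$. This is because $\EE\left[Y^t(\bI-\bH\bH^t)Y\right]=\Tr\left[(\bI-\bH\bH^t)\,\EE[YY^t]\right]$ and, using $\EE[XX^t]=\bI_r$ so that $\EE[\hY\hY^t]=\bG\bG^t$, the second term equals $\Tr\left[((\bH\bH^t)^\dagger-\bI)\bG\bG^t\right]$; hence $J$ sees the distribution of $Y$ solely through $\EE[YY^t]$, while its dependence on $\bG$ and $\bH$ is otherwise distribution-free.

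First I would make the reduction precise. Assuming $n\ge d$ so that the sample covariance $\tbK$ is full-rank (the generic regime of interest, where $n\gg d$), I introduce an auxiliary Gaussian $\tilde{Y}\sim\cN(\mathbf{0},\tbK)$ whose covariance matches the empirical one exactly. Because $J$ responds only to the second moment, the empirical quadratic GAN \eqref{opt:quadratic-gan-empirical} and the population quadratic GAN \eqref{eq:quadratic-gan-objective} instantiated at $\tilde{Y}$ are the identical minmax problem over $\bG$ and $\bH$ under the constraint $\range(\bG)=\range(\bH)$.

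Next I would invoke the reformulations already carried out in the excerpt, now applied to $\tilde{Y}$. Since $\tilde{Y}$ is Gaussian, Lemma \ref{lem:det-coupling} justifies restricting the convex discriminator $\psi$ in \eqref{eq:minmax_w2gan} to positive-semidefinite quadratics, and the chain of equalities running from \eqref{eq:minmax_w2gan} through \eqref{opt:quadratic-GAN} and \eqref{opt:quadratic-GAN-eq} to \eqref{eq:quadratic-gan-objective} shows that $\min_{\bG}\max_{\bH} J(\bG,\bH)$ at covariance $\tbK$ equals $W_2^2(\PP_{\tilde{Y}},\PP_{\bG(X)})$ minimized over linear generators. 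Theorem \ref{thm:PCA}, applied to $\tilde{Y}\sim\cN(\mathbf{0},\tbK)$ with $\tbK$ full-rank, then identifies the optimal linear generator as the $r$-PCA of $\tbK$, so that the covariance of its output is $\tilde{\bK}_r$. As the generator optimized in \eqref{opt:quadratic-gan-empirical} is literally the same object, any global solution $(\bG^*,\bH^*)$ must satisfy $\bG^*(\bG^*)^t=\tilde{\bK}_r$.

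The point I expect to require the most care is that the empirical distribution $\QQ_Y^n$ is not Gaussian, so neither Lemma \ref{lem:det-coupling} nor Theorem \ref{thm:PCA} can be applied to $\QQ_Y^n$ directly; the lift to a covariance-matched Gaussian is exactly what circumvents this, and it is legitimate only because the quadratic discriminator collapses the objective onto second moments. A secondary issue is the full-rank hypothesis on $\tbK$: when $n<d$ and $\tbK$ is rank-deficient one should either restrict attention to $r\le\rank(\tbK)$ or supply a continuity argument showing that the range constraint $\range(\bG)=\range(\bH)$ and the pseudoinverse in $J$ still return the $r$-PCA of $\tbK$, but in the regime $n\gg d$ the full-rank case is all that is needed.
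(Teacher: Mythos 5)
Your proposal is correct and takes essentially the same route as the paper's proof: the paper likewise replaces the empirical distribution by the covariance-matched Gaussian $Z\sim\cN(\mathbf{0},\tbK)$ (via the observation that the quadratic-discriminator objective sees the data only through its covariance, so the restricted dual equals $W_2^2(\PP_Z,\PP_{\bG(X)})$) and then invokes Theorem \ref{thm:PCA} to conclude $\bG^*(\bG^*)^t=\tilde{\bK}_r$. Your explicit flagging of the full-rank hypothesis on $\tbK$ (needed to apply Theorem \ref{thm:PCA}) is a detail the paper's proof leaves implicit, but otherwise the two arguments coincide.
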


Next, we examine the generalization error of the quadratic GAN. Consider the case where $d=r$ (the case $r<d$ is similar). The generalization error can be written as the $W_2$ distance between the true distribution $\PP_Y$ and the learned distribution $\PP_{\bG^*(X)}$ (Appendix Section \ref{sec:generalization}):
\begin{align}
W_2^2\left(\PP_{Y},\PP_{\bG^*(X)}\right) = W_2^2\left(\cN(0,\bK),\cN(0,\tbK)\right).
\end{align}
The $W_2^2$ distance between two Gaussians depends only on the covariance matrices. More specifically:
\begin{align}
W_2^2\left(\cN(0,\bK),\cN(0,\tbK)\right) &=\Tr(\bK)+\Tr(\tbK)-2\Tr\left(\left(\bK^{1/2}\tbK \bK^{1/2}\right)^{1/2}\right).
\end{align}
Hence, the convergence of this quantity only depends on the convergence of the empirical covariance to the population covariance, together with smoothness property of this function of the covariance matrices. The convergence has been established to be at a quick rate of $\tO(\sqrt{d/n})$ \cite{rippl2016limit}.

\section{Stability}\label{sec:stability}
Theorem \ref{thm:eq-quality} merely focuses on the quality of the global solution of the quadratic GAN's optimization, ignoring its computational aspects. One common way to solve the GAN's min-max optimization is to use alternating gradient descent with $s_G$ gradient steps for the generator updates and $s_D$ gradient steps for the discriminator updates. For simplicity, we refer to such a method as the $(s_G,s_D)$-alternating gradient descent. In this section, we analyze the global stability of the quadratic GAN under the alternating gradient descent approach.

First, we analyze the stability of the quadratic GAN under the $(1,1)$-alternating GD in the full-rank case. By using variables $\bU:=\bG \bG^t$ and $\bA:=\bH\bH^t$, optimization \eqref{opt:quadratic-gan-empirical} can be written as
\begin{align}\label{opt:quadratic-gan-empirical-AU}
\min_{\bU}~\max_{\bA}~ &\Tr\left[ \left(\bI-\bA\right) \tbK \right]- \Tr\left[\left(\left(\bA^t\right)^{\dagger}-\bI\right) \bU^t \right]\\
&\range(\bU)= \range(\bA).\nonumber
\end{align}
For this case, we have the following result:
\begin{theorem}\label{thm:stability-1-1-AU}
In the quadratic GAN optimization \eqref{opt:quadratic-gan-empirical-AU}, assuming full rank $\bA$ and $r=d$, the $(1,1)$-alternating gradient descent is globally stable.
\end{theorem}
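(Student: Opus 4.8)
The plan is to exploit a structural feature of the objective in \eqref{opt:quadratic-gan-empirical-AU} that is not immediately apparent. Once we restrict to the full-rank case $r=d$, where $\bA^{\dagger}=\bA^{-1}$ and $\bU,\bA$ are symmetric, the objective
\[
J(\bU,\bA)=\Tr\!\left[(\bI-\bA)\tbK\right]-\Tr\!\left[(\bA^{-1}-\bI)\bU\right]
\]
is \emph{jointly convex--concave}: it is linear (hence convex) in $\bU$ for fixed $\bA$, and since $\Tr[\bA^{-1}\bU]$ is convex in $\bA$ on the positive-definite cone whenever $\bU\succeq 0$, the map $\bA\mapsto J(\bU,\bA)$ is concave. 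First I would make this precise and, in passing, compute the two gradients $\grad_{\bU}J=\bI-\bA^{-1}$ and $\grad_{\bA}J=\bA^{-1}\bU\bA^{-1}-\tbK$, whose common zero pins down the unique interior equilibrium $(\bU^{*},\bA^{*})=(\tbK,\bI)$ with value $J^{*}=0$ (consistent with Theorem \ref{thm:eq-quality} in the case $r=d$, where $\tbK_r=\tbK$).

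Next I would study the gradient-flow limit of the alternating updates, $\dot{\bU}=-\grad_{\bU}J=\bA^{-1}-\bI$ and $\dot{\bA}=\grad_{\bA}J=\bA^{-1}\bU\bA^{-1}-\tbK$, and propose the Lyapunov function
\[
V(\bU,\bA)=\tfrac12\|\bU-\tbK\|_F^{2}+\tfrac12\|\bA-\bI\|_F^{2}.
\]
Writing $z=(\bU,\bA)$, $z^{*}=(\tbK,\bI)$ and $F(z)=(\grad_{\bU}J,-\grad_{\bA}J)$, the flow is $\dot z=-F(z)$, so $\dot V=-\langle z-z^{*},F(z)\rangle$. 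Convex--concavity makes $F$ a monotone operator, and since $F(z^{*})=0$ this yields $\langle z-z^{*},F(z)\rangle=\langle z-z^{*},F(z)-F(z^{*})\rangle\ge 0$, hence $\dot V\le 0$: the trajectory is bounded and contracts toward the saddle. To upgrade from stability to \emph{global} convergence I would use that $\bA\mapsto J(\bU,\bA)$ is \emph{strictly} concave when $\bU\succ 0$, so that $\dot V<0$ off the slice $\{\bA=\bI\}$; on that slice the dynamics reduce to $\dot{\bA}=\bU-\tbK$, whose only invariant point is $\bU=\tbK$, and a LaSalle invariance argument then forces convergence to $z^{*}$.

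Finally I would transfer this to the genuine discrete $(1,1)$-alternating gradient descent. The standard route is to show that, for a sufficiently small step size $\eta$, $V$ still decreases per iteration: expanding $V(z_{t+1})-V(z_t)$ produces the favourable first-order term $-\eta\langle z_t-z^{*},F(z_t)\rangle\le 0$ from monotonicity plus an $O(\eta^{2})$ remainder that is dominated once $\eta$ is small and the iterates remain in a bounded region, with the alternating (as opposed to simultaneous) update order contributing an extra stabilizing cross term that I would keep rather than discard.

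The step I expect to be the main obstacle is controlling \emph{feasibility} along the trajectory, because the entire argument rests on the convex--concave structure, which in turn requires $\bA\succ 0$ (so that $\bA^{-1}$ exists and $J$ is well defined) and $\bU\succeq 0$ (so that $J$ is concave in $\bA$). I would therefore need forward-invariance of the region $\{\bU\succeq 0,\ \bA\succ 0\}$: intuitively, as $\bA$ approaches singularity the term $\bA^{-1}\bU\bA^{-1}$ in $\dot{\bA}$ blows up with positive eigenvalues and repels the flow from the boundary of the PD cone, while the coupling $\dot{\bU}=\bA^{-1}-\bI$ supplies a restoring force that keeps $\bU$ inside the PSD cone; making these barrier-type estimates rigorous, and choosing $\eta$ so the discrete iterates inherit them, is the delicate part. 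A fallback, if the global invariance proves stubborn, is to diagonalize in the eigenbasis of $\tbK$, where $V$ still controls the off-diagonal entries, and to carry out the remaining scalar phase-plane analysis $\dot u=a^{-1}-1,\ \dot a=u\,a^{-2}-k$ directly, which is globally convergent for every eigenvalue $k>0$.
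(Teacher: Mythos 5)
Your proposal is correct and follows essentially the same route as the paper: the paper's entire proof consists of observing that \eqref{opt:quadratic-gan-empirical-AU} is a convex--concave min--max problem in $(\bU,\bA)$ and citing the Arrow--Hurwicz--Uzawa stability result, whose standard derivation is exactly your monotone-operator Lyapunov argument with $V(z)=\tfrac12\|z-z^{*}\|_F^{2}$ around the saddle $(\tbK,\bI)$. The only difference is that you make the argument self-contained and explicitly flag the forward-invariance of $\{\bU\succeq 0,\ \bA\succ 0\}$ needed for the convex--concave structure, a hypothesis the paper's one-line proof leaves implicit.
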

\begin{proof}
Optimization \eqref{thm:stability-1-1-AU} is a convex-concave min-max problem. Using the Arrow-Hurwicz-Uzawa result \cite{freund1996game}, one can show that the $(1,1)$-alternating gradient descent is globally stable for this optimization. 
\end{proof}
In the standard quadratic GAN, the alternating GD is applied on the $(\bG,\bH)$ objective function which is not generally convex-concave. For this case, we have the following result:
\begin{theorem}\label{thm:stability-1-1}
In the quadratic GAN optimization \eqref{opt:quadratic-gan-empirical}, assuming $\tbK=\bI$, full rank $\bH$ and $r=d$, the $(1,1)$-alternating gradient descent is globally stable.
\end{theorem}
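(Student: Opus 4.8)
The plan is to study the continuous-time gradient-descent-ascent flow associated with the $(1,1)$-alternating updates and to show it converges to the empirical-PCA equilibrium from every full-rank initialization (this matches the convention of Theorem \ref{thm:stability-1-1-AU}, whose proof invokes the continuous-time Arrow–Hurwicz–Uzawa result). In the regime $r=d$ with $\bH$ full rank the range constraint $\range(\bG)=\range(\bH)$ is vacuous and $(\bH\bH^t)^{\dagger}=(\bH\bH^t)^{-1}$; setting $\tbK=\bI$, the objective \eqref{opt:quadratic-gan-empirical} becomes $J(\bG,\bH)=d-\Tr[\bH\bH^t]-\Tr[(\bH\bH^t)^{-1}\bG\bG^t]+\Tr[\bG\bG^t]$. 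First I would compute the flow $\dot{\bG}=-\grad_{\bG}J=2\left((\bH\bH^t)^{-1}-\bI\right)\bG$ and $\dot{\bH}=\grad_{\bH}J=2\left((\bH\bH^t)^{-1}\bG\bG^t(\bH\bH^t)^{-1}-\bI\right)\bH$, and identify the equilibria: since $\bG,\bH$ remain full rank, $\dot{\bG}=0$ forces $\bH\bH^t=\bI$, and then $\dot{\bH}=0$ forces $\bG\bG^t=\bI$. This equilibrium set is exactly the empirical $r$-PCA solution of Theorem \ref{thm:eq-quality} (here $\tilde{\bK}_r=\bI$), so global convergence of the flow to it is precisely the claimed global stability.

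The key step is to pass from the $(\bG,\bH)$ dynamics to the induced dynamics of the symmetric variables $\bU:=\bG\bG^t$ and $\bA:=\bH\bH^t$. Using $\dot{\bU}=\dot{\bG}\bG^t+\bG\dot{\bG}^t$ (and likewise for $\bA$) together with the symmetry of the bracketed factors, a short computation gives
\begin{align*}
\dot{\bU}&=2\left(\bA^{-1}\bU+\bU\bA^{-1}\right)-4\bU, &
\dot{\bA}&=2\left(\bA^{-1}\bU+\bU\bA^{-1}\right)-4\bA.
\end{align*}
The crucial observation is that both right-hand sides share the \emph{identical} nonlinear coupling term $2(\bA^{-1}\bU+\bU\bA^{-1})$ and differ only in the linear damping. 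Subtracting, $\dot{\bU}-\dot{\bA}=-4(\bU-\bA)$, so $\bU(t)-\bA(t)=e^{-4t}\left(\bU(0)-\bA(0)\right)$ decays to zero exponentially from any initialization, independently of the coupling.

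To finish, I would exploit this exponential contraction onto the invariant manifold $\bU=\bA$, on which the coupling collapses to $2(\bU^{-1}\bU+\bU\bU^{-1})=4\bI$ and the flow reduces to the single decoupled linear equation $\dot{\bU}=4(\bI-\bU)$, whose global attractor is $\bU=\bI$. Treating the full system as a perturbation of this limiting flow with vanishing driving term $\bU-\bA=O(e^{-4t})$, a standard asymptotically-autonomous (or Lyapunov, e.g.\ with $V=\Tr[(\bU-\bI)^2]$) argument yields $\bU(t),\bA(t)\to\bI$, i.e.\ $\bG\bG^t,\bH\bH^t\to\bI$. I would also verify that positive-definiteness is preserved along the flow: since $\dot{\bH}=2\bB\bH$ is linear in $\bH$ with $\bB:=(\bH\bH^t)^{-1}\bG\bG^t(\bH\bH^t)^{-1}-\bI$, one has $\tfrac{d}{dt}\det\bH=2\,\Tr[\bB]\det\bH$, so $\bH$ cannot become singular in finite time and $\bA^{-1}$ stays well-defined.

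The main obstacle is this last step: making the two-timescale reduction rigorous and \emph{global} rather than local. The coupling term is nonlinear and is only tamed once $\bU-\bA$ is small, so one must combine the exponential decay of $\bU-\bA$ with a Lyapunov estimate valid away from the equilibrium that simultaneously rules out blow-up or degeneracy of $\bA$ en route. A secondary issue is transferring stability of the continuous flow to the genuinely discrete $(1,1)$-alternating updates; I expect this to follow for sufficiently small step size by the usual argument that the alternating scheme is a consistent first-order discretization of the gradient flow whose exponential stability persists under small perturbations.
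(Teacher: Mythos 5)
Your reduction is correct and elegant as far as it goes: the flow equations, the identity $\dot{\bU}-\dot{\bA}=-4(\bU-\bA)$ (which indeed uses $\tbK=\bI$), and the collapse to $\dot{\bU}=4(\bI-\bU)$ on the invariant manifold $\{\bU=\bA\}$ all check out, and this exponential-contraction observation does not appear in the paper. But the proof is incomplete exactly where you flag it, and the missing step is not a routine technicality. The asymptotically-autonomous (Markus-type) argument requires the trajectory to remain in a compact subset of the \emph{open} cone $\{\bA\succ 0\}$ with $\bU$ bounded: the perturbation $2e^{-4t}\left(\bA^{-1}\bDelta_0+\bDelta_0\bA^{-1}\right)$ is small only relative to $\lambda_{\min}(\bA)$, so exponential decay of $\bU-\bA$ alone does not tame it. Your determinant computation (which needs the bound $\Tr[\bB]\geq -d$, valid since $\bA^{-1}\bU\bA^{-1}\succeq 0$) rules out singularity of $\bH$ in \emph{finite} time only; it says nothing about $\lambda_{\min}(\bA)\to 0$ or $\|\bU\|\to\infty$ as $t\to\infty$, which is precisely what must be excluded (and genuinely can occur on the boundary: with $\bG_0=0$ one gets $\bU\equiv 0$ and $\bA(t)\to 0$, so full-rank initialization of $\bG$, not just $\bH$, is silently needed — by you and by the paper alike). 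Moreover, your fallback Lyapunov candidate $V=\Tr[(\bU-\bI)^2]$ cannot carry the global argument off the manifold: the paper explicitly demonstrates (Appendix Figure~\ref{fig:lyp}) that the Frobenius distance $\|\bG\bG^t-\tbK\|_F$ is \emph{not} monotone along trajectories, which is exactly why the authors call their result non-trivial.

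The paper closes this gap with a different idea that you would need to import or reinvent: the Von Neumann divergence
\begin{align*}
V(\bG,\bH)=\Tr\left[\bG\bG^t-\bI-\log\left(\bG\bG^t\right)\right]+\Tr\left[\bH\bH^t-\bI-\log\left(\bH\bH^t\right)\right],
\end{align*}
for which a direct computation gives $\dot{V}=-c\,\Tr\left[(\bI-\bA^{-1})^2(\bU+\bA)\right]\leq 0$ for a positive constant $c$, with equality only at $\bA=\bI$ (and invariance of that set then forces $\bU=\bI$, by LaSalle). Because the $\log\det$ terms diverge both as an eigenvalue of $\bU$ or $\bA$ tends to $0$ and as it tends to $\infty$, sublevel sets of $V$ are compact in the open positive-definite cone, so monotonicity of $V$ delivers in one stroke the global boundedness and non-degeneracy your two-timescale scheme lacks. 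You could graft this $V$ onto your argument as the ``estimate valid away from equilibrium'' you ask for, but at that point the contraction-plus-limit-flow machinery becomes redundant, since $V$ alone proves global stability. Your final caveat about passing from the continuous flow to the discrete $(1,1)$-alternating updates is fair but applies equally to the paper, whose proof likewise argues along continuous trajectories.
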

To prove Theorem \ref{thm:stability-1-1}, we use the following function as a Lyapunov function:
\begin{align}\label{eq:lyp}
V(\bG,\bH)=&\Tr\left[\bG\bG^t-\bI-\log\left(\bG\bG^t\right) \right]+\Tr\left[\bH\bH^t-\bI-\log\left(\bH\bH^t\right) \right].
\end{align}
Each term of this function is the Von Neumann divergence. We prove that this non-negative function is monotonically decreasing along every trajectory of the (1,1)-alternating gradient descent and its value is zero at the global solution. This phenomena is non-trivial because the Frobenius norm distance between $\bG\bG^t$ and $\tbK$ is not monotonically decreasing along every trajectory (Appendix Figure \ref{fig:lyp}).

In the low-rank case where $r<d$, however, we have the following negative result:

\begin{theorem}\label{thm:instability-1-1}
In the quadratic GAN optimization \eqref{opt:quadratic-gan-empirical}, if $r<d$, the $(s_G,s_D)$-alternating gradient descent is {\it not} globally stable for any $s_G$ and $s_D$.
\end{theorem}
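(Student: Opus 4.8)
The plan is to prove instability by exhibiting a forward-invariant set of the alternating gradient dynamics that does not contain the global (empirical PCA) solution. Any trajectory started in this set then cannot converge to the global optimizer, which contradicts global stability in the sense used in Theorem \ref{thm:stability-1-1}, and the construction will be insensitive to the number of inner steps $s_G,s_D$.

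First I would fix an $r$-dimensional subspace built from the eigenvectors of $\tbK$. Let $\tbK$ have orthonormal eigenvectors $\bv_1,\dots,\bv_d$ with eigenvalues $\lambda_1>\dots>\lambda_d>0$ (distinct eigenvalues are generic), and set $S=\mathrm{span}\{\bv_i:i\in I\}$ for an index set $I$ of size $r$ with $I\neq\{1,\dots,r\}$ (for instance swap $\bv_r$ for $\bv_{r+1}$). Then $S$ is $\tbK$-invariant but is not the top-$r$ eigenspace $T=\range(\tbK_r)$, and some top index $j\le r$ is missing from $I$, so $\bv_j\perp S$. By Theorem \ref{thm:eq-quality} the global solution satisfies $\bG^*(\bG^*)^t=\tbK_r$, so any matrix $\bG\bG^t$ with $\range(\bG)\subseteq S$ obeys $\bv_j^t\bG\bG^t\bv_j=0$ while $\bv_j^t\tbK_r\bv_j=\lambda_j>0$; hence $\|\bG\bG^t-\tbK_r\|\ge\lambda_j$ is bounded away from zero as long as $\range(\bG)\subseteq S$. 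It therefore suffices to show that, starting from any $(\bG,\bH)$ with $\range(\bG)=\range(\bH)=S$, every gradient step keeps both ranges inside $S$.

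The core step is this forward-invariance. Writing $\bA=\bH\bH^t$, the generator gradient is $\nabla_\bG J=-2(\bA^{\dagger}-\bI)\bG$; when $\range(\bG),\range(\bH)\subseteq S$ both $\bA^{\dagger}\bG$ and $\bG$ have range in $\range(\bA)=S$, so a descent step in $\bG$ preserves $\range(\bG)\subseteq S$. For the discriminator, $\nabla_\bH J=-2\tbK\bH+2\bA^{\dagger}\bG\bG^t\bA^{\dagger}\bH$. The first term has range in $S$ \emph{precisely because} $S$ is $\tbK$-invariant, which is the whole reason for choosing $S$ to be an eigenspace. For the second term one uses the differential of the pseudoinverse of a fixed-rank symmetric PSD matrix: its two pieces proportional to $(\bI-\bA\bA^{\dagger})=\bI-\bP_S$ are annihilated when contracted against $\bG\bG^t$, whose range lies in $S$, leaving exactly $\nabla_\bH\Tr[\bA^{\dagger}\bG\bG^t]=-2\bA^{\dagger}\bG\bG^t\bA^{\dagger}\bH$, whose range is again in $S$. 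Thus an ascent step in $\bH$ preserves $\range(\bH)\subseteq S$, and any re-projection enforcing $\range(\bG)=\range(\bH)$ maps into $S$ as well. Since each individual step preserves membership in $S$, so does any composition of $s_G$ generator steps and $s_D$ discriminator steps, establishing invariance — and hence the failure to converge — for all $s_G,s_D$.

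The main obstacle is exactly this pseudoinverse term: in the low-rank regime $\bA^{\dagger}$ is not continuous across rank drops, so the argument needs $\bH$ to retain full column rank $r$ within $S$ along the trajectory (generic for the chosen initializations, since rank loss is nongeneric) in order for the fixed-rank differential formula to apply. A fully transparent instance that sidesteps these subtleties is $d=2$, $r=1$ with $\tbK=\diag(\lambda_1,\lambda_2)$, $\lambda_1>\lambda_2>0$: initializing $\bG=g\,\be_2$ and $\bH=h\,\be_2$ keeps every iterate a scalar multiple of $\be_2$ (here $\bA^{\dagger}=h^{-2}\be_2\be_2^t$ is explicit and scalar), so the one-parameter dynamics can at best reach $\bG\bG^t=\lambda_2\,\be_2\be_2^t$, never the PCA optimum $\tbK_1=\lambda_1\,\be_1\be_1^t$ supported on $\be_1$. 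This confirms that the alternating gradient descent is not globally stable once $r<d$.
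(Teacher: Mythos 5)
Your proof is correct and rests on the same core idea as the paper's: the alternating gradient dynamics preserve the common column space of $\bG$ and $\bH$, so a trajectory initialized with the wrong $r$-dimensional subspace is trapped at Frobenius distance at least $\lambda_j>0$ from the empirical PCA solution $\tbK_r$ of Theorem \ref{thm:eq-quality}, and since the invariance holds per individual step it is insensitive to $s_G$ and $s_D$. Where you genuinely improve on the paper is in making the invariance claim actually true: the paper asserts without qualification that ``the column-space of $\bG$ and $\bH$ does not change with gradient updates,'' but for a generic subspace $S$ this fails for the discriminator step, since the term $-2\tbK\bH$ in $\nabla_{\bH}J$ pushes $\range(\bH)$ out of $S$ whenever $\tbK S\not\subseteq S$. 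Your restriction to $S$ spanned by a non-top set of eigenvectors of $\tbK$ is exactly the condition under which both gradient components lie in $S$ (together with your correct observation that the rank-deficient pieces of the pseudoinverse differential, proportional to $\bI-\bA\bA^{\dagger}$, are annihilated against $\bG\bG^t$), and it still yields a nonempty invariant set avoiding the optimum, which is all the negative result needs. Your explicit $d=2$, $r=1$ instance with $\tbK=\diag(\lambda_1,\lambda_2)$ and iterates confined to multiples of $\be_2$ is a useful self-contained certificate that bypasses the fixed-rank differentiability caveat entirely. Two minor points: global stability is disproved by any single non-converging trajectory, so the genericity remarks (distinct eigenvalues, rank retention along the trajectory) are dispensable once the scalar example is in hand; and strictly speaking the paper's claim could be rescued by a particular parametrization enforcing $\range(\bG)=\range(\bH)$, but as written your eigenspace construction is the honest repair.
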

\begin{proof}
Note that due to the constraint $\range(\bG)=\range(\bH)$, by initializing $\bG$ to some matrix, the colum-space of $\bG$ and $\bH$ does not change with gradient updates. This leads to the above result.
\end{proof}
One can think about using an equivalent optimization \eqref{opt:quadratic-gan-empirical} where the constraint $\range(\bG)=\range(\bH)$ is replaced by the constraint $\range(\bG)\subseteq \range(\bH)$ (by assuming $\bA=\bH \bH^t$). For example, if $\bH$ is full-rank, this constraint always holds. However, this does not solve the stability issue of Theorem \ref{thm:instability-1-1} . It is because in the desired saddle point, $\bH^*$ should be a low-rank matrix whose range matches the range of $\bG^*$. If one starts the alternating GD with a full-rank $\bH$, the second term of the objective function \eqref{opt:quadratic-gan-empirical} would decrease unboundedly when $\bH$ loses rank in the null-space of $\bG$ (because of the term $(\bH\bH^t)^{\dagger}\bG\bG^t$). Therefore, unless $\bH$ has a matching range with $\bG$, alternating GD will not converge to a low-rank solution for $\bH$. 

As we explained above, the main source of the instability of the quadratic GAN optimization in the low-rank case comes from the constraint $\range(\bG)=\range(\bH)$, i.e. the matching column-space of the generator and the discriminator functions. One way to deal with this issue is to decouple the optimization to two parts where in one part we optimize the subspace and in the second part, we solve GAN's min-max optimization {\it within} that subspace. Below, we explain this approach. We denote the subspace by some orthogonal basis $\bS \in \mathbb{R}^{d\times r}$ where $\bS^t\bS=\bI$. Then, we re-write
\begin{align}\label{eq:mappings}
\bG:=\bS \bG_S, \quad \bH:=\bS \bH_S,
\end{align}
where $\bG_S$ and $\bH_S$ are full-rank $r \times r$ matrices. Also, we define $\bK_S:=\bS^t \bK \bS$. Using these notation, the objective function of the quadratic GAN can be re-written as:
\begin{align}\label{eq:quadratic-gan-objective-subspace}
J(\bS,\bG_S,\bH_S)=&\Tr\left[ \left(\bI-\bH_S\bH_S^t\right) \bK_S \right]- \Tr\left[\left(\left(\bH_S\bH_S^t\right)^{\dagger}-\bI\right) \bG_S \bG_S^t \right]+\Tr\left[\bK-\bK_S\right].
\end{align}
Note that the first two terms of this objective is the same as \eqref{eq:quadratic-gan-objective} where all variables are projected to the column-space of $\bS$. Using the above argument, we propose the following {\it min-min-max} optimization:

\begin{align}\label{opt:min-min-max}
\min_{\bS}~ \min_{\bG_S} ~ \max_{\bH_S}~ &J(\bS,\bG_S,\bH_S)\\
&\bS^t\bS=\bI.\nonumber
\end{align}
The inner min-max optimization over $\bG_S$ and $\bH_S$ for a given $\bS$ is similar to the full-rank case analysis (Theorem \ref{thm:stability-1-1}). Given the global convergence of the $(1,1)$-alternating GD in the full-rank case, the outer optimization on $\bS$ can be re-written as

\begin{align}\label{opt:subspace-opt}
\max_{\bS}~ &\Tr\left[\bS^t \bK \bS\right]\\
&\bS^t\bS=\bI.\nonumber
\end{align}

Although this optimization is non-convex, it has been shown that its global optimizer, which recovers the leading eigenvectors of $\bK$, can be computed efficiently using GD \cite{ge2016efficient}.

An alternative approach to solve the quadratic GAN optimization \eqref{opt:quadratic-gan-empirical} is to solve the max part as a closed form and use GD to solve the min part. We analyze the convergence of this approach in Appendix Theorem \ref{thm:primal}.

\section{Discussion}\label{sec:beyond}

Our experiments on state-of-the-art GAN architectures suggest limitations of model-free designs even when data comes from a very basic Gaussian model. This motivates us to take a model-based approach to designing GANs. In this paper, we accomplish this goal in the spacial case of Gaussian models. Even though this is for a restrictive case, we have learnt a few lessons which will be useful as we broaden our approach. We obtained a general way to specify loss functions for GANs, by connecting the unsupervised GAN learning problem to the supervised learning problem. The quadratic loss function used for the Gaussian problem is a special case of this general connection. Moreover, we learnt that by properly constraining the class of generators and the class of discriminators in a {\em balanced} way, we can preserve good population solution while allowing fast generalization. Finally, we saw that using a model-based design, we could analyze the global stability of different computational approaches using gradient descent. These properties are hard to come by in model-free designs.

Our framework can potentially be used to design GANs for more complex distributions. For example, consider an {\em error-free} GAN architecture where there exists $\bG^*\in \cG$ such that $\PP_{\bG^*(X)}=\PP_Y$. The key question is how to design a {\em balanced} discriminator function for a given generator class $\cG$, i.e. if $\cG$ is the set of neural network functions with $l$ layers each with $m$ neurons, what should be the discriminator function set? We provide a non-parametric answer to this question in Appendix Section \ref{sec:SM-beyond-Gaussian}. A parametric characterization of the discriminator class $\cD$ for a given generator class $\cG$ is an interesting future direction for a model-based view to designing GANs.

\section{Acknowledgment}
We would like to thank Changho Suh, Fei Xia and Jiantao Jiao for helpful discussions.


\newpage


\begin{appendices}

\section{More Details On Numerical Experiments}\label{SMsec:exp}
For WGAN+WC and WGAN+GP, we use the implementation of \cite{gulrajani2017improved}. Neural net generators and discriminators have 3 hidden layers each with 64 neurons. We use the batch size of 200. WGAN-GP uses the Adam optimizer with a learning rate of $10^{-4}$, $\beta_1=0.5$ (i.e., the exponential decay rate for the first moment estimates) and $\beta_2=0.9$ (the exponential decay rate for the second moment estimates). WGAN-WC uses RMSProp optimizer with a learning rate of $5\times 10^{-5}$. In alternating gradient descent, five gradient steps for the discriminator updates and one gradient step for the generator updates are used. $\lambda$ parameter for WGAN+GP is set to be 0.1 (these parameters are set in the pipeline of \cite{gulrajani2017improved} for the Gaussian case).

Figure \ref{fig:nonlinearG-random} shows the performance of WGAN+GP and WGAN+WC in different values of $r$ with different initialization procedures when the generator and the discriminator functions are both neural networks. The setup is the same as the one of Figure \ref{fig:nonlinearG}. The only difference is that in experiments of Figure \ref{fig:nonlinearG-random}, we use a random covariance $\bK$ instead of the normalized identity covariance. To generate a random $\bK$, first we generate $\bW \bSigma \bW^t$ where $\bW(i,j)\sim\cN(0,1)$ and $\Sigma$ is a diagonal matrix where $\bSigma(i,i)\sim\text{uniform}(0,10)$. Then, we normalize $\bW \bSigma \bW^t$ to have unit Frobenius norm.

In the linear generator case, we restrict the generator to linear functions since both the observed data and the randomness have Gaussian distributions. In the case of $r=d$, a maximum-likelihood generative model is $\cN(\tilde{\bmu},\tilde{\bK})$ where $\tilde{\bmu}$ and $\tilde{\bK}$ are the sample mean and the sample covariance, respectively. In the case where $r<d$, a maximum-likelihood generative model is $\cN(\tilde{\bmu},\tilde{\bK}_r)$ where $\tilde{\bK}_r$ is the $r$-PCA of $\tilde{\bK}$, i.e., assuming that $\tilde{\bK}$ has the eigen decomposition of $\tilde{\bK}=\sum_{i=1}^{d}\lambda_i \bv_i \bv_i^t$ where $\lambda_1\geq \lambda_2\geq ...$, then $\hat{\bK}_r=\sum_{i=1}^{r}\lambda_i \bv_i \bv_i^t$. These generative models act as our {\it baselines} to evaluate GAN's performance in this case.

\section{Computation of the Quadratic GAN}\label{sec:implementaion}
We consider two approaches to compute a solution for the quadratic GAN. In the $(1,1)$-alternating gradient descent approach, if $\bH$ is full rank, we have
\begin{align}
\nabla_{\bG} J(\bG,\bH)&= 2 \left(\bI-\bA^{-1}\right)\bG,\\
\nabla_{\bH} J(\bG,\bH)&= -2\tbK \bH+2 \bA^{-1} \bH \bH^{-t},\nonumber
\end{align}
where $\bA=\bH\bH^t$. We use these explicit formula to compute a solution for the quadratic GAN in Figure \ref{fig:quadratic_gan}-b) experiments.

Another approach to compute a solution for the quadratic GAN is to solve the inner maximization in a closed-form and then use GD to solve the minimization part. We refer to this approach as the {\it primal} approach. We explain this approach below:

\begin{figure}[t]
\centering
\includegraphics[width=0.8\linewidth]{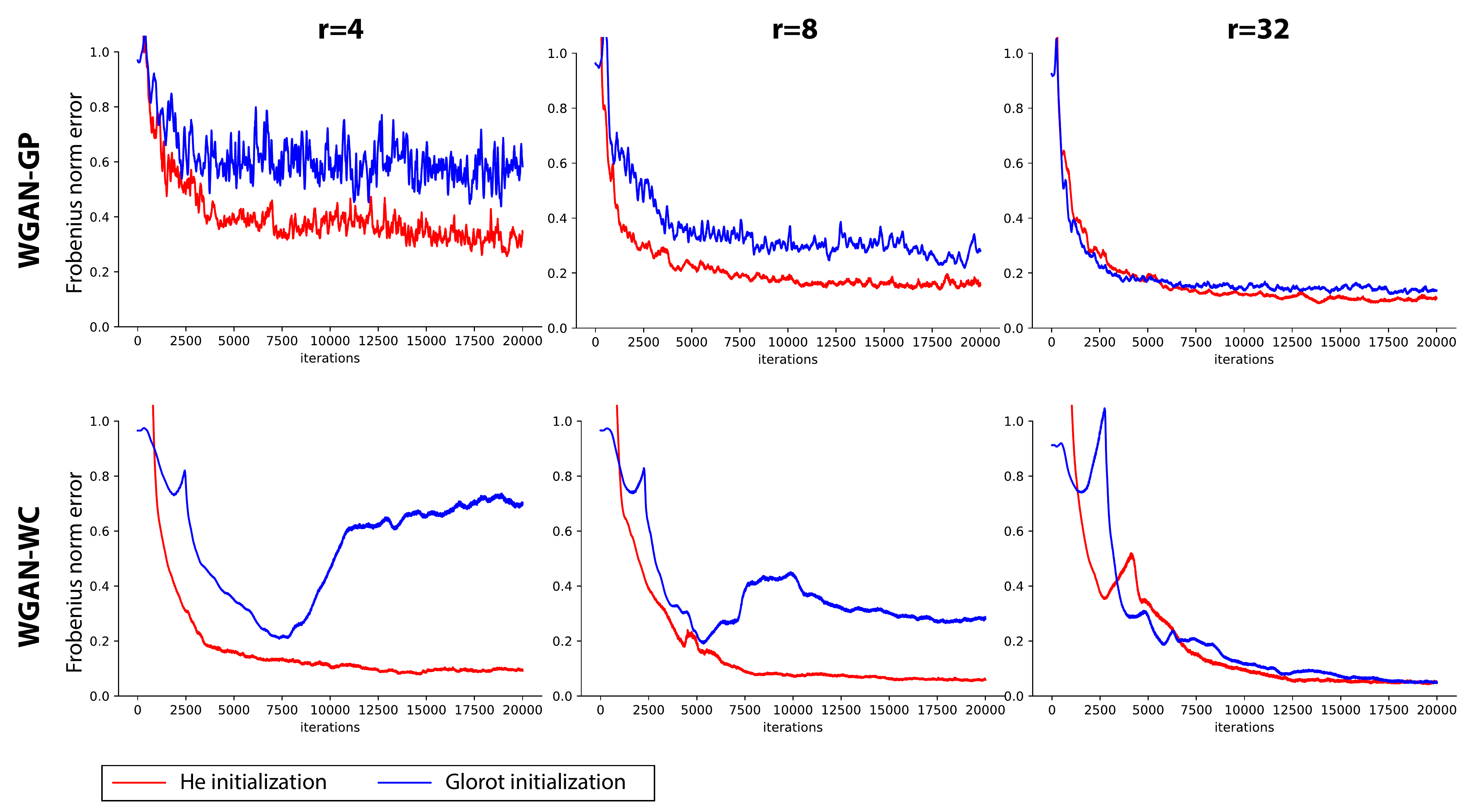}
\caption{An illustration of the performance of WGAN+GP and WGAN+WC in different values of $r$ with different initialization procedures when the generator and the discriminator functions are both neural networks. In these experiments, $\bK$ is a random covariance matrix.}
\label{fig:nonlinearG-random}
\end{figure}

\begin{figure}[t]
\centering
\includegraphics[width=0.7\linewidth]{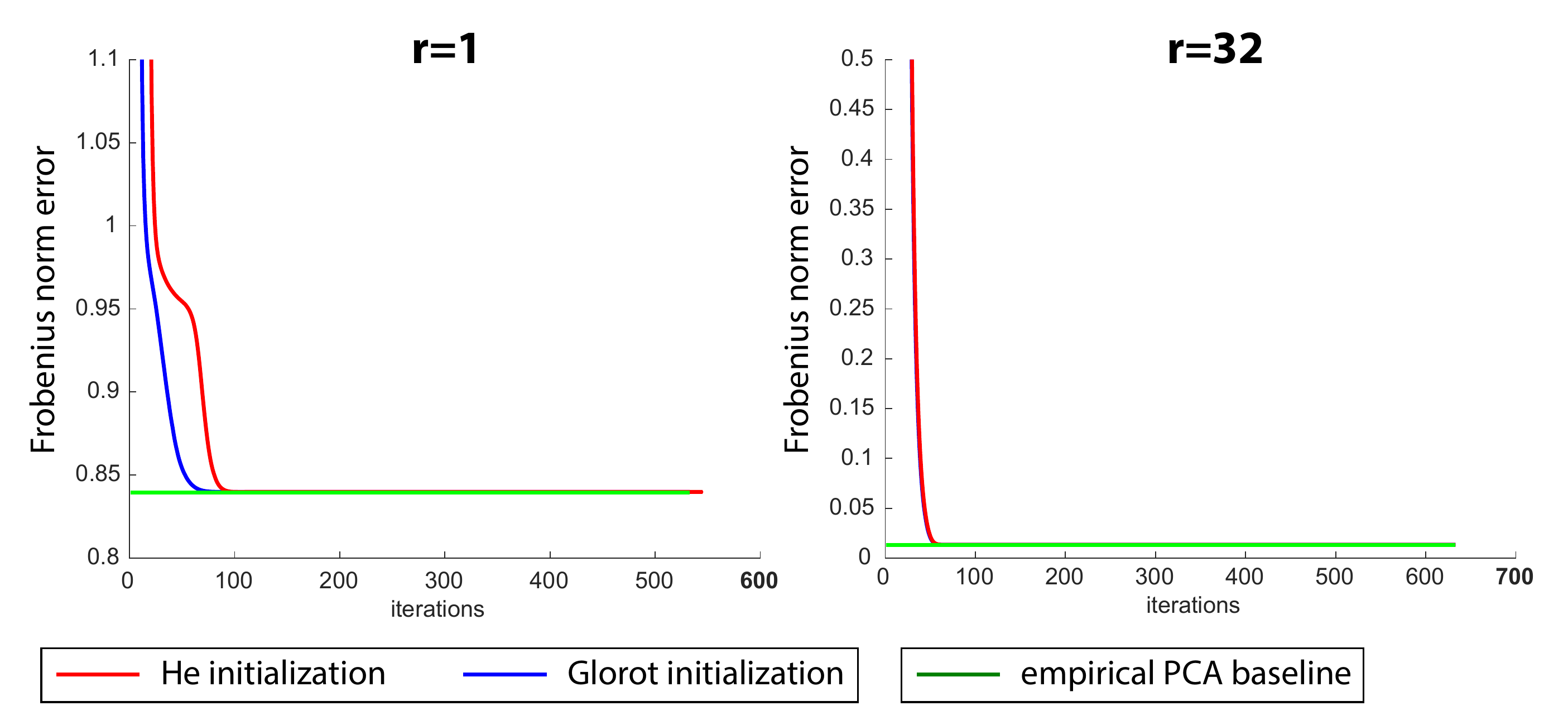}
\caption{An illustration of the performance of Quadratic GAN in different values of $r$ with different initialization procedures using the primal approach explained in Appendix Section \ref{sec:implementaion}.}
\label{fig:1-inf}
\end{figure}

Define
\begin{align}\label{eq:h-define}
h(\bG)=\max_{\bH}~ J(\bG,\bH).
\end{align}
Let $\bH^*$ be the optimal $\bH$ for a given $\bG$. From Lemma \ref{lem:det-coupling}, we have
\begin{align}\label{eq:H-opt}
\left(\bH^*\right)^2= \tbK^{-\frac{1}{2}} \left(\tbK^{\frac{1}{2}}\bG \bG^t \tbK^{\frac{1}{2}}  \right)^{\frac{1}{2}}\tbK^{-\frac{1}{2}}.
\end{align}
Using \eqref{eq:H-opt} in \eqref{eq:h-define}, we have
\begin{align}\label{eq:h-function}
h(\bG)=\Tr\left[\tbK+\bG\bG^t -2\left(\tbK^{\frac{1}{2}}\bG \bG^t \tbK^{\frac{1}{2}}  \right)^{\frac{1}{2}} \right].
\end{align}
In the primal approach, we solve the following optimization: 
\begin{align}\label{opt:primal}
\min_{\bG}~~ h(\bG)
\end{align}

\begin{theorem}\label{thm:primal}
In the quadratic GAN optimization \eqref{opt:quadratic-gan-empirical},
\begin{itemize}
  \item [(a)] For $r=1$, the primal approach of optimization \eqref{opt:primal} is globally stable.
  \item [(b)] For $r=d$, the primal approach of optimization \eqref{opt:primal} if the initial $\bG$ is symmetric.
\end{itemize}
\end{theorem}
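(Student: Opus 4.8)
The plan is to exploit that the primal objective \eqref{eq:h-function} is exactly the squared Bures--Wasserstein distance,
\[
h(\bG) = W_2^2\big(\cN(\mathbf{0},\tbK),\cN(\mathbf{0},\bG\bG^t)\big) = \Tr(\tbK) + \Tr(\bG\bG^t) - 2\,\Tr\big((\tbK^{1/2}\bG\bG^t\tbK^{1/2})^{1/2}\big),
\]
and to treat both cases through the common observation that $h$ depends on $\bG$ only through $\bU := \bG\bG^t$ and is \emph{convex} in $\bU$ (the fidelity term $\Tr((\tbK^{1/2}\bU\tbK^{1/2})^{1/2})$ is concave in $\bU$), with a unique global minimizer $\bU = \tbK$ that by Theorem~\ref{thm:eq-quality} is the empirical PCA solution. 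All the non-convexity comes from the factorization $\bU = \bG\bG^t$, so the entire task is to show that gradient flow on $\bG$ in \eqref{opt:primal} cannot stall away from $\bU = \tbK$.

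For part (a), $r=1$, I would first collapse the matrix square root using the rank-one structure. Writing $\bg\in\RR^d$ for the generator, $(\tbK^{1/2}\bg\bg^t\tbK^{1/2})^{1/2} = \tbK^{1/2}\bg\bg^t\tbK^{1/2}/\|\tbK^{1/2}\bg\|$, so \eqref{eq:h-function} reduces to the explicit form $h(\bg) = \Tr(\tbK) + \|\bg\|^2 - 2\sqrt{\bg^t\tbK\bg}$, with gradient $\grad h(\bg) = 2\bg - 2\,\tbK\bg/\sqrt{\bg^t\tbK\bg}$. I would then analyze the flow $\dot\bg = -\grad h(\bg)$ in the eigenbasis of $\tbK$: writing $\bg = \sum_i c_i\bv_i$ with $\tbK\bv_i = \lambda_i\bv_i$ and $\lambda_1 > \lambda_2 \geq \cdots$, each coordinate obeys $\dot c_i = 2 c_i(\lambda_i/\sqrt{q}-1)$ with $q = \sum_j \lambda_j c_j^2$. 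The key monotonicity is
\[
\tfrac{d}{dt}\log\!\big(c_i^2/c_1^2\big) = \tfrac{4(\lambda_i-\lambda_1)}{\sqrt{q}} \le 0,
\]
strict for $\lambda_i < \lambda_1$, so every off-top component decays relative to the top one and the direction of $\bg$ converges to $\pm\bv_1$; a one-dimensional radial analysis then gives $\|\bg\|^2 \to \lambda_1$, i.e. $\bg\bg^t \to \lambda_1 \bv_1\bv_1^t$, the $1$-PCA. The only failing initializations are those with $c_1(0)=0$, a measure-zero set forming the stable manifold of the non-top (saddle) stationary points, which is exactly what global stability asserts.

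For part (b), $r=d$, symmetry is what tames the factorization. Restricting $\bG$ to symmetric matrices, the relevant flow is the projected gradient $\dot\bG = -2\bG + \bQ\bG + \bG\bQ$, where $\bQ := \tbK^{1/2}(\tbK^{1/2}\bG^2\tbK^{1/2})^{-1/2}\tbK^{1/2}$ is symmetric; this manifestly keeps $\bG$ symmetric along trajectories, removing the orthogonal gauge freedom $\bG\mapsto\bG\mathbf{O}$ that would otherwise produce a continuum of minimizers. I would then show the landscape is benign on this slice: any symmetric stationary point satisfies $\bQ\bG + \bG\bQ = 2\bG$, and diagonalizing $\bG$ shows that for full-rank positive definite $\bG$ this forces $\bQ = \bI$ on the range, hence $\tbK^{1/2}\bG^2\tbK^{1/2} = \tbK^2$, i.e. $\bG^2 = \tbK$. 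Monotone decrease of $h$ along the flow, or (in the spirit of Theorem~\ref{thm:stability-1-1}) a Von~Neumann / log-det Bregman Lyapunov function built from $\bU$ and $\tbK$, then upgrades ``convergence to a stationary point'' into convergence to $\bG\bG^t = \tbK$.

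The main obstacle is part (b). For $r=1$ everything diagonalizes and the power-method-type monotonicity closes the argument cleanly, but for $r=d$ the composition of the convex $\tilde h(\bU)$ with the quadratic map $\bG\mapsto\bG\bG^t$ is genuinely non-convex. The crux is to verify that the symmetry restriction both (i) is preserved by the flow and (ii) eliminates all spurious stationary points and strict saddles (in particular ruling out the degenerate configurations where $\bG$ has a zero or a cancelling pair of eigenvalues), so that descent of $h$ genuinely drives $\bU$ to $\tbK$ rather than to a non-global critical configuration.
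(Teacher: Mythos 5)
Your part (a) is correct, and it takes a genuinely different route from the paper. The paper's proof is a \emph{local} classification: it finds the stationary points $\bg=\sqrt{\lambda_j}\bu_j$, computes the Hessian of $h$ at each one, and shows the Hessian is PSD only for $j=1$ while having a negative eigenvalue for every $j>1$, so only the top-eigenvector solution can be stable. You instead run a \emph{global} dynamics argument: in the eigenbasis the ratio monotonicity $\frac{d}{dt}\log(c_i^2/c_1^2)=4(\lambda_i-\lambda_1)/\sqrt{q}\le 0$ is a power-method mechanism that drives the direction to $\pm\bv_1$ from every initialization with $c_1(0)\neq 0$, explicitly exhibiting the measure-zero failure set rather than inferring instability of saddles. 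This is a strictly stronger conclusion than the paper's Hessian test (which, strictly speaking, establishes only that the spurious critical points are unstable). Two small points: you should note that descent keeps $\|\bg\|$, hence $q$, bounded, so the decay rate integrates to $-\infty$ and the off-top components genuinely vanish relative to $c_1$; and your closed form $h(\bg)=\Tr(\tbK)+\|\bg\|^2-2\sqrt{\bg^t\tbK\bg}$ is the \emph{exact} reduction, whereas the paper simplifies the fidelity term to $\bg^t\tbK^{1/2}\bg/\|\bg\|$, which agrees with $h$ only on eigendirections of $\tbK$ (the two happen to share the same stationary set, which is why the paper's conclusion survives). Your version is cleaner.

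Part (b) is where your proposal has a genuine gap, which you half-acknowledge. First, the flow you analyze is not the algorithm in the statement: you replace gradient descent on $h$ by the symmetrized (projected) flow $\dot\bG=-2\bG+\bQ\bG+\bG\bQ$. The actual Euclidean gradient at symmetric $\bG$ is $\nabla_\bG h=2\bG-2\bQ\bG$, and $\bQ\bG$ is in general \emph{not} symmetric: writing $M=\tbK^{1/2}\bG$, one has $\bQ\bG=\tbK^{1/2}(MM^t)^{-1/2}M=\tbK^{1/2}W$ with $W$ the orthogonal polar factor of $M$, and symmetry fails whenever $\bG$ does not commute with $\tbK$; e.g.\ for $\tbK=\diag(1,4)$ and $\bG=\left(\begin{smallmatrix}0&1\\1&0\end{smallmatrix}\right)$ one gets $\bQ\bG=\left(\begin{smallmatrix}0&1\\2&0\end{smallmatrix}\right)$, and similar failures occur for PSD $\bG$ not commuting with $\tbK$. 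So ``manifestly keeps $\bG$ symmetric'' is true of your modified dynamics, but the invariance of the symmetric slice under the \emph{true} gradient descent --- the hypothesis the theorem actually invokes --- is exactly what remains unproven. Second, even granting the symmetric slice, your stationary-point analysis covers only the nondegenerate case: when $\mu_i=0$ or $\mu_i+\mu_j=0$ the relation $\tilde{\bQ}_{ij}(\mu_i+\mu_j)=2\mu_i\delta_{ij}$ constrains nothing, and you defer both ruling out these configurations and upgrading monotone decrease of $h$ to convergence (monotonicity alone does not prevent trajectories from approaching such saddles). As written, (b) is a program, not a proof.

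For comparison, the paper's route for (b) is different and much shorter: restricting to PSD symmetric $\bG$, it asserts that the fidelity term equals $\Tr[\tbK^{1/2}\bG]$, so $h$ becomes convex with $\nabla_\bG h=2\bG-2\tbK^{1/2}$ --- a symmetric, PSD-preserving gradient --- and convex descent converges to $\bG=\tbK^{1/2}$; symmetric initializations with negative eigenvalues are dispatched by a landscape-symmetry remark. In fairness to you, that asserted identity is precisely the crux you flagged: $\Tr[(\tbK^{1/2}\bG^2\tbK^{1/2})^{1/2}]=\|\tbK^{1/2}\bG\|_{*}\ge\Tr[\tbK^{1/2}\bG]$, with equality iff $\tbK^{1/2}\bG$ is PSD, i.e.\ iff $\bG$ and $\tbK$ commute, so the paper's argument quietly lives on the commuting slice (where the flow $\dot\bG=-2\bG+2\tbK^{1/2}$ does remain commuting and converges). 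Your diagnosis of where the difficulty sits is therefore accurate; what your proposal is missing is the same thing a complete proof needs --- an honest treatment of the symmetry/commutativity invariance of the true dynamics and of the degenerate stationary configurations.
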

\begin{proof}
See Appendix Section \ref{proof:primal}.
\end{proof}

If $r=d$ and for a PSD initialization of $\bG$, we have
\begin{align}\label{eq:der-h-full}
\nabla_{\bG}~ h(\bG)=2\bG-2\tbK^{\frac{1}{2}}.
\end{align}
Moreover, in the proof of Theorem \ref{thm:primal}-part (b), we show that with such an initialization, $\bG$ remains PSD along the gradient descent trajectory. Thus, the above gradient formula holds.

In the case where $r<d$, the gradient calculation is a bit more challenging. Here we focus on the case where $r=1$. In this case, $\bG$ is a vector, thus we represent it by $\bg$. We can write $\bg=a \bv$ where $a=\|\bg\|$ and $\|\bv\|=1$.

Let $\tbG:=(\bg\bg^t)^{1/2}$, i.e., $\tbG$ is the PSD square root of the matrix $\bg\bg^t$. We have
\begin{align}
 \tG=(\bg\bg^t)^{\frac{1}{2}}=a (\bv \bv^t)^{\frac{1}{2}}=a \bv \bv^t=\frac{\bg \bg^t}{\|\bg\|}.
\end{align}
Thus, we have
\begin{align}\label{eq:h-r=1}
h(\bg)=\Tr\left[\tbK+\bg\bg^t-2\tbK^{\frac{1}{2}} \frac{\bg\bg^t}{\|\bg\|} \right].
\end{align}
Thus, we have
\begin{align}\label{eq:der-r=1}
\nabla_{\bg}~ h(\bg)=2\bg-2\tbK^{\frac{1}{2}}\frac{\bg}{\|\bg\|}.
\end{align}

Figure \ref{fig:1-inf} shows our empirical result for the primal approach when $\bK$ is a random matrix (the same matrix as the one used in Figure \ref{fig:nonlinearG-random}). As illustrated in this figure, quadratic GAN recovers the maximum-likelihood/PCA solution. In these implementations, we use MATLAB's ordinary differential equation solver (ode45).


\section{Generalization Issues of GANs}\label{sec:generalization}
Consider the W2GAN optimization \eqref{opt:w2gan}. In practice, one solves the GAN optimization over the empirical distribution of the data $\QQ_Y^n$, not the population distribution $\PP_Y$. Thus, it is important to analyze how close optimal empirical and population GAN solutions are in a given sample size $n$. This notion is captured in the generalization error of the GAN optimization, defined as follows:
\begin{definition}\label{def:gen-unsup}
Let $n$ be the number of observed samples from $Y$. Let $\hbG(.)$ and $\bG^*(.)$ be the optimal generators for empirical and population W2GANs respectively. Then,
\begin{align}\label{eq:gen-bound}
d_{\cG}(\PP_Y,\QQ_Y^n):=W_2^2(\PP_Y,\PP_{\hbG(X)})-W_2^2(\PP_Y,\PP_{\bG^*(X)}),
\end{align}
is a random variable representing the excess error of $\hbG$ over $g^*$, evaluated on the true distribution.
\end{definition}
$d_{\cG}(\PP_Y,\QQ_Y^n)$ can be viewed as a distance between $\PP_Y$ and $\QQ_Y^n$ which depends on $\cG$. To have a proper generalization property, one needs to have $d_{\cG}(\PP_Y,\QQ_Y^n)\to 0$ quickly as $n\to \infty$. First, we characterize this rate for an unconstrained $\cG$. For an unconstrained $\cG$, the second term of \eqref{eq:gen-bound} is zero (this can be seen using a space filling generator function \cite{cannon1987group}). Moreover, $\PP_{\hbG(X)}$ can be arbitrarily close to $\QQ_Y^{n}$. Thus, we have
\begin{align}
d_{\cG}(\PP_Y,\QQ_Y^n)=W_2^2(\PP_Y,\QQ_Y^n),
\end{align}
which goes to zero with high probability with the rate of $\cO(n^{-2/d})$.

The approach described for the unconstrained $\cG$ corresponds to the {\it memorization} of the empirical distribution $\QQ_Y^n$ using the trained model. Note that one can write
\begin{align*}
n^{-\frac{2}{d}}=2^{-\frac{2\log(n)}{d}}.
\end{align*}
Thus, to have a small $W_2^2(\PP_Y,\QQ_Y^n)$, the number of samples $n$ should be exponentially large in $d$ \cite{canas2012learning}. It is possible that for some distributions $\PP_Y$, the convergence rate of $W_2^2(\PP_Y,\QQ_Y^n)$ is much faster than $n^{-2/d}$. For example, \cite{weed2017sharp} shows that if $\PP_Y$ is clusterable (i.e., $Y$ lies in a fixed number of separate balls with fixed radii), then the convergence of $W_2^2(\PP_Y,\QQ_Y^n)$ is fast. However, even in that case, one optimal strategy would be to memorize observed samples,  which does not capture what GANs do.

In supervised learning, constraining the predictor to be from a small family improves generalization. A natural question is whether constraining the family of generator functions $\cG$ can improve the generalization of GANs. In the Gaussian setting, we are constraining the generators to be linear. To simplify calculations, we assume that $Y\sim \cN(\mathbf{0},\bI_d)$ and $d=r$. Under these assumptions, the W2GAN optimization can be re-written as
\begin{align}\label{opt:fit-gaussian}
\min_{\bmu,\bK}~~ W_2^2 (\QQ_Y^n,\cN\left(\bmu,\bK\right)),
\end{align}
where $\bK$ is the covariance matrix. The optimal population solution of this optimization is $\bmu^*_{pop}=\mathbf{0}$ and $\bK^*_{pop}=\bI$, which provides a zero Wasserstein loss with respect to the true distribution.

In the following theorem, we characterize the convergence rate of the W2GAN optimization for linear generators with  single-parameters. In this case, $\bK=s^2 \bI$ ($\bK$ is a diagonal matrix whose diagonal elements are equal to $s^2$). Note that if $s=1$, the trained model matches the population distribution.

\begin{theorem}\label{thm:convergance}
Let $\bmu^*_{n}$ and $\bK^*_{n}=(s^*)^2 \bI$ be optimal solutions for optimization \eqref{opt:fit-gaussian} where $\bK$ is restricted to $s^2 \bI$ (i.e. the generator is a single-parameter linear function). Then, $s^*\to 1$ with the rate of $n^{-2/d}$.
\end{theorem}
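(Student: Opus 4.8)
The plan is to collapse the inner optimal transport problem into an explicit function of the scalar parameters, minimize it in closed form, and then read off the convergence of $s^*$ from the known rate at which the empirical measure approaches its population law in $W_2$.

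First I would expand the transport cost. Writing $Z\sim\cN(\bmu,s^2\bI)$ as $Z=\bmu+s\,W$ with $W\sim\cN(\mathbf0,\bI)$, for any coupling we have
\begin{align*}
\EE\|Y-Z\|^2 = \frac1n\sum_{i=1}^n\|\by_i\|^2 + \|\bmu\|^2 + s^2 d - 2\,\EE[Y\cdot Z].
\end{align*}
Since $W\mapsto\bmu+sW$ is a deterministic bijection for $s>0$, a coupling of $(Y,Z)$ is the same as a coupling of $(Y,W)$, and $\EE[Y\cdot\bmu]$ depends only on the fixed $Y$-marginal. Hence maximizing the cross term gives $\max_{\text{coupling}}\EE[Y\cdot Z]=\hat\bmu\cdot\bmu+s\,C$, where $\hat\bmu:=\frac1n\sum_{i=1}^n\by_i$ and $C:=\max_{\text{coupling}}\EE[Y\cdot W]$ is a correlation that is independent of both $\bmu$ and $s$. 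This yields the explicit objective
\begin{align*}
W_2^2\big(\QQ_Y^n,\cN(\bmu,s^2\bI)\big)=\frac1n\sum_{i=1}^n\|\by_i\|^2+\|\bmu\|^2+s^2 d-2\hat\bmu\cdot\bmu-2sC.
\end{align*}

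This objective is separable and strictly convex in $(\bmu,s)$, so the first-order conditions immediately give $\bmu^*_n=\hat\bmu$ and $s^*=C/d$ (with $C\ge0$ by taking the product coupling, so $s^*\ge0$). It remains to control $C$. By its definition together with $W_2^2(\QQ_Y^n,\cN(\mathbf0,\bI))=\frac1n\sum_i\|\by_i\|^2+d-2C$, I can solve for $C$ and substitute to obtain
\begin{align*}
s^*-1=\frac{1}{2d}\Big(\frac1n\sum_{i=1}^n\|\by_i\|^2-d\Big)-\frac{1}{2d}\,W_2^2\big(\QQ_Y^n,\cN(\mathbf0,\bI)\big).
\end{align*}

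Finally I would substitute the asymptotics of the two terms. The empirical second moment is a centered average of i.i.d.\ $\chi^2$-type variables, so $\frac1n\sum_i\|\by_i\|^2-d=O_p(n^{-1/2})$. The second term is exactly $-\tfrac1{2d}W_2^2(\QQ_Y^n,\cN(\mathbf0,\bI))$, which for an absolutely continuous law converges at rate $\Theta(n^{-2/d})$ by the empirical-measure rates in \cite{canas2012learning,weed2017sharp}; in the high-dimensional regime ($d>4$) this dominates the $n^{-1/2}$ fluctuation, so $s^*\to1$ from below at rate $n^{-2/d}$. The main obstacle is the lower bound: to assert that $s^*-1$ is genuinely $\Theta(n^{-2/d})$ and not $o(n^{-2/d})$ one needs the matching $\Omega(n^{-2/d})$ lower bound on $W_2^2(\QQ_Y^n,\cN(\mathbf0,\bI))$ and must verify that the $O_p(n^{-1/2})$ second-moment fluctuation cannot cancel it. This is what makes the argument work cleanly only once the two terms separate in order, i.e.\ for $d>4$, where the $W_2$ term carries a definite sign and the slower decay.
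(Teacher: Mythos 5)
Your proposal is correct, and it reaches the paper's conclusion through the same overall skeleton --- expand $W_2^2$ into the fixed second moments minus twice the optimal cross-correlation, obtain $\bmu^*_n=\hat{\bmu}$ and $s^*=C/d$ in closed form, and convert the rate question into the known $n^{-2/d}$ rate for $W_2^2(\QQ_Y^n,\cN(\mathbf{0},\bI))$ --- but your justification of the key intermediate step is different and arguably tighter. The paper works with the deterministic optimal transport map and asserts that the conditional centroids transform affinely, $\bc_{y_i}^{(\bmu,\bK)}=\bK^{1/2}\bc_{y_i}^{(\mathbf{0},\bI)}+\bmu$, which is exactly the claim that the optimal coupling is invariant under the reparametrization $(\bmu,s)$; this is stated without proof. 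Your bijection argument (a coupling of $(Y,Z)$ with $Z=\bmu+sW$ is the same as a coupling of $(Y,W)$, and the cross term $\hat{\bmu}\cdot\bmu+s\,\EE[Y\cdot W]$ is linear in $s>0$, so the maximizing coupling does not depend on the parameters) supplies precisely that missing justification, and your $C=\max_{\text{coupling}}\EE[Y\cdot W]$ coincides with the paper's $\Tr(\bA)$. You are also more careful at the end: the paper uses the identity $2\bigl(\Tr(\bA)-d\bigr)=\bigl(\frac1n\sum_i\|\by_i\|^2-d\bigr)-W_2^2(\QQ_Y^n,\cN(\mathbf{0},\bI))$ only to conclude an $\cO(n^{-2/d})$ upper rate, whereas you make explicit the sign of the dominant term and the fact that a genuine $\Theta(n^{-2/d})$ statement requires the matching $\Omega(n^{-2/d})$ lower bound on the empirical $W_2^2$ (valid since the population law is absolutely continuous and $\QQ_Y^n$ is $n$-atomic) together with the separation $d>4$ so that the $O_p(n^{-1/2})$ second-moment fluctuation cannot cancel it --- caveats the paper leaves implicit, though its surrounding ``ball of radius $\approx n^{-2/d}$'' discussion clearly relies on exactly this two-sided behavior.
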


Now, consider a ball around the distribution $\QQ_Y^n$ where $\PP_Y$ lies on its surface. Note that the radius of this ball is a random variable that is concentrated around $n^{-2/d}$ \cite{villani2008optimal}. This radius is large and goes to zero exponentially slow in $d$. If there is another Gaussian distribution inside this ball, the learner would select that distribution in the optimization rather than $\PP_Y$. The Gaussian distribution computed in Theorem \ref{thm:convergance} satisfies this condition. Thus, in this case, one needs $n$ to be exponentially large in $d$ to have the error go to zero. To enhance the convergence rate of GANs, in practice, discriminators are constrained. We discuss how discriminators should be constrained properly in the Gaussian benchmark in Section \ref{sec:qgan}.



\begin{figure}[t]
\centering
\includegraphics[width=0.4\linewidth]{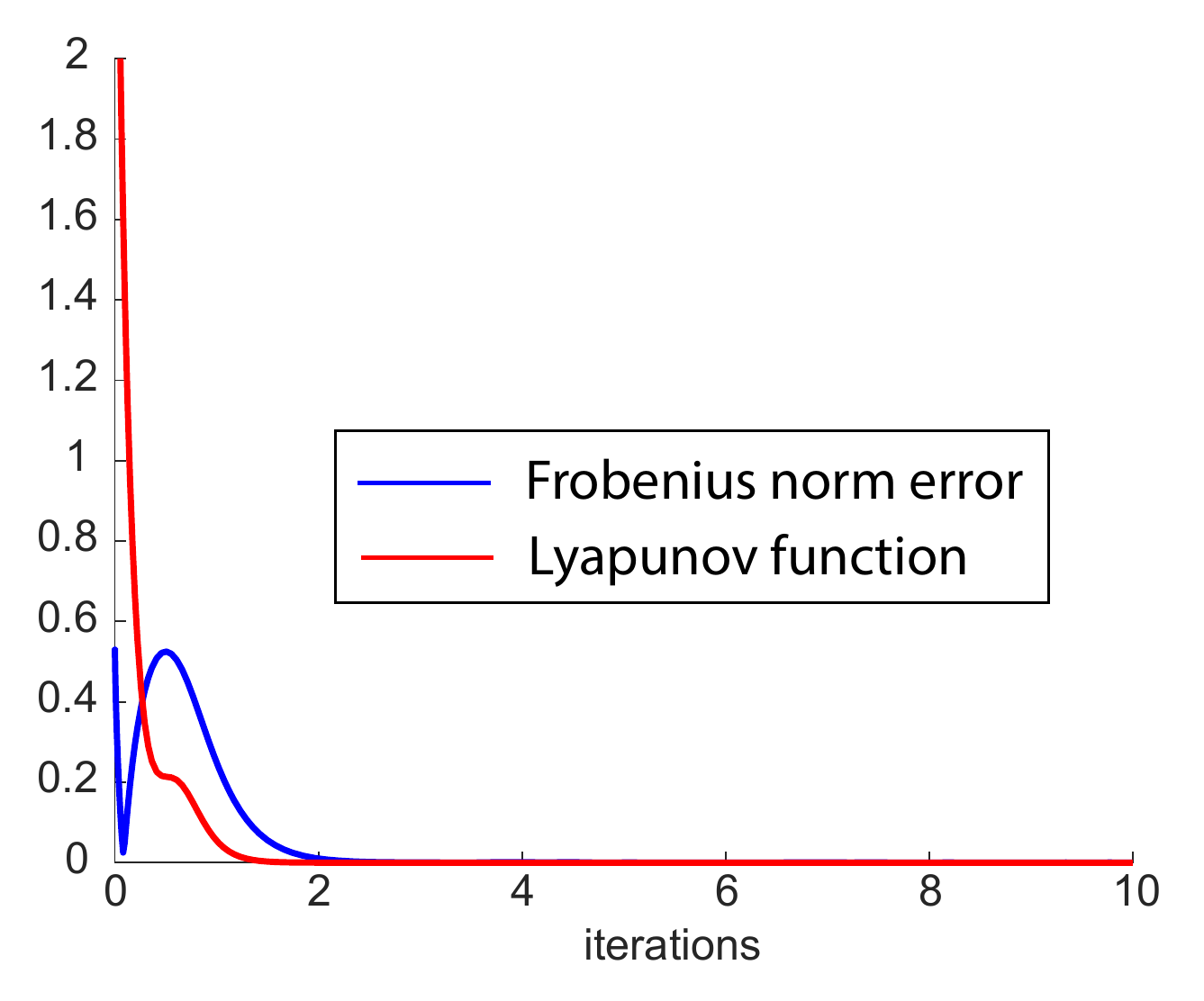}
\caption{Values of the Lyapunov function \eqref{eq:lyp} and $\|\bG\bG^t-\tbK\|_F$ over a trajectory of $(1,1)$-alternating gradient descent. The former is monotonically decreasing while the later is not. This fact is used to prove the global stability result of Theorem \ref{thm:stability-1-1}.}
\label{fig:lyp}
\end{figure}

\section{Beyond Gaussians}\label{sec:SM-beyond-Gaussian}
Consider the case where there exists an {\em error-free} GAN architecture; i.e., there exists $\bG^*\in \cG$ such that $\PP_{\bG^*(X)}=\PP_Y$. The key question is how to design a {\em balanced} discriminator function for a given generator class $\cG$.

\begin{theorem}\label{thm:non-guassian}
Let $\cG$ be the set of generator functions in the W2GAN optimization \eqref{opt:w2gan}. Let $\nabla \cD$ be the set of gradient functions of members of $\cD$. Let
\begin{align}\label{eq:disc-const-general}
\nabla \cD=\left\{F_{\bG_1}^{-1}\circ F_{\bG_2}: \forall \bG_1,\bG_2 \in \cG \right\}.
\end{align}
Then, constraining the discriminator to the intersection of convex function with $\cD$ does not change the optimal solution of the original W2GAN optimization.
\end{theorem}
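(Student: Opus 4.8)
The plan is to reduce the min--max statement to a pointwise (in $\bG$) statement about the inner maximization and then exhibit the unique optimal discriminator explicitly. Fix any $\bG\in\cG$. Constraining the discriminator to $\cD\cap\{\psi\text{ convex}\}$ can only shrink the feasible set, so the constrained inner maximum is at most the unconstrained one, which by \eqref{opt:dual-w2} equals $W_2^2(\PP_Y,\PP_{\bG(X)})$. Thus it suffices to prove the reverse inequality, i.e. that the unconstrained optimizer is itself feasible for the constrained problem. I would first record that the objective in \eqref{opt:dual-w2} is invariant under adding a constant to $\psi$ (since $(\psi+c)^*=\psi^*-c$ and the two constant contributions cancel), so membership of a discriminator in $\cD$ is governed entirely by its gradient lying in $\nabla\cD$.

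The crux is to show that for every $\bG\in\cG$ the unique optimal convex potential lies in $\cD$. By Lemma \ref{lem:det-coupling}, for fixed $\bG$ the maximizer $\psi^{opt}$ of \eqref{opt:dual-w2} is unique and convex, and its gradient is the Brenier/optimal-transport map pushing $\PP_Y$ forward to $\PP_{\bG(X)}$; denote this map by $F_{\bG}:=\grad\psi^{opt}$. Now I would invoke the error-free hypothesis: there is $\bG^*\in\cG$ with $\PP_{\bG^*(X)}=\PP_Y$, so $F_{\bG^*}$ is the optimal transport map from $\PP_Y$ to itself, which is the identity (its potential is $\tfrac12\|\cdot\|^2$), whence $F_{\bG^*}^{-1}=\mathrm{id}$. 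Taking $\bG_1=\bG^*$ and $\bG_2=\bG$ in the defining display \eqref{eq:disc-const-general} gives $F_{\bG^*}^{-1}\circ F_{\bG}=F_{\bG}=\grad\psi^{opt}\in\nabla\cD$. Hence $\psi^{opt}$ is convex with gradient in $\nabla\cD$, so it lies in $\cD\cap\{\psi\text{ convex}\}$ and attains the unconstrained inner maximum; this yields the reverse inequality and therefore equality of the inner maxima for every $\bG$.

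Since the inner value $W_2^2(\PP_Y,\PP_{\bG(X)})$ is reproduced for each $\bG\in\cG$, the outer minimization over $\cG$ and its minimizer are unchanged, which is the claim. The main obstacle is conceptual rather than computational: correctly identifying $F_{\bG}$ as the transport map taken relative to the data distribution $\PP_Y$ (not an arbitrary reference), since this is exactly what makes $F_{\bG^*}=\mathrm{id}$ and lets the composition $F_{\bG_1}^{-1}\circ F_{\bG_2}$ collapse to the single map we need. A secondary point to handle carefully is that $F_{\bG^*}^{-1}$ must be read as the identity on all of $\RR^d$ so that the composition type-checks on $\range(F_{\bG})=\mathrm{supp}(\PP_{\bG(X)})$, and that the regularity hypotheses of Lemma \ref{lem:det-coupling} (absolute continuity of $\PP_Y$ with convex support) guarantee both existence and, crucially, uniqueness of the potential we are placing into $\cD$.
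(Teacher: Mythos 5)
Your argument is correct, and it reaches the conclusion by a genuinely different mechanism than the paper. The paper's proof (Appendix \ref{sec:proofs}) reads $F_{\bG}$ as a CDF-type rank map via inverse transform sampling: it sets $Y=\bG^*(X)$, notes $F_{\bG^*}(Y)$ is uniform, concludes that the composed map (quantile-of-rank, $F_{\bG}^{-1}\circ F_{\bG^*}$ up to a notational slip in the ordering of inverses) pushes $\PP_Y$ forward to $\PP_{\bG(X)}$, and then invokes Lemma \ref{lem:det-coupling} to identify it with $\grad\psi^{opt}$. You instead read $F_{\bG}$ as the Brenier map from $\PP_Y$ to $\PP_{\bG(X)}$, i.e. $F_{\bG}=\grad\psi^{opt}$ itself, so that the error-free hypothesis forces $F_{\bG^*}=\mathrm{id}$ and the composition $F_{\bG^*}^{-1}\circ F_{\bG}$ collapses to the single map you need; feasibility of $\psi^{opt}$ then gives your clean two-inequality sandwich for the inner maximum at every fixed $\bG$, a step the paper leaves implicit. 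Your route buys rigor precisely where the paper's is thin: in $d>1$ the paper's composed map is a composition of two convex gradients (rank through a uniform reference, then a quantile map), which pushes $\PP_Y$ to $\PP_{\bG(X)}$ but need not itself be the gradient of a convex function, so the identification with the \emph{unique} optimizer of \eqref{opt:dual-w2} is unjustified as written except in one dimension, whereas your $F_{\bG}$ is a convex gradient by construction. The trade-off, which you correctly flag as the conceptual crux, is that your reading makes the class \eqref{eq:disc-const-general} depend on $\PP_Y$ rather than on $\cG$ alone (the paper's reference-measure reading keeps $\cD$ a function of $\cG$ only, matching the design goal stated in Section \ref{sec:beyond}); under the section's standing error-free assumption $\PP_Y=\PP_{\bG^*(X)}$ with $\bG^*\in\cG$, the two readings describe related classes and your proof is a valid, and tighter, establishment of the stated claim.
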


\section{Proofs}\label{sec:proofs}

\subsection{Proof of Theorem \ref{thm:PCA}}
First we prove the following lemmas:

\begin{lemma}\label{lem:proj}
Let $\cS$ be an $r$ dimensional subspace in $\mathbb{R}^d$. Let $\hY$ be a random variable whose support lies in $\cS$. Then, $\hY^*$, the optimal solution of the optimization
\begin{align}\label{opt:proj}
\inf_{\PP_{\hY}}~~ W_2^2(\PP_Y,\PP_{\hY}),
\end{align}
is the projection of $Y$ to $\cS$.
\end{lemma}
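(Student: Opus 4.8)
The plan is to unfold the Wasserstein distance into its coupling definition, combine the two nested infima into a single minimization over joint laws, and then reduce everything to a pointwise orthogonal-projection argument. Writing $\Pi_{\cS}$ for the orthogonal projection onto $\cS$ and using the definition \eqref{eq:W2},
\begin{align*}
\inf_{\PP_{\hY}}~ W_2^2(\PP_Y,\PP_{\hY}) = \inf_{\PP_{\hY}}~ \inf_{\PP_{Y,\hY}}~ \EE\left[\|Y-\hY\|^2\right],
\end{align*}
where the outer infimum ranges over distributions $\PP_{\hY}$ supported in $\cS$ and the inner one over couplings with the prescribed marginals. Since the marginal of $\hY$ is itself a free variable (constrained only to have support in $\cS$), these two infima collapse into a single optimization over all joint laws $\PP_{Y,\hY}$ whose first marginal equals $\PP_Y$ and whose second marginal is supported in $\cS$.

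First I would establish a pointwise lower bound. For any $y\in\RR^d$ and any $z\in\cS$, the decomposition $y-z=(y-\Pi_{\cS}y)+(\Pi_{\cS}y-z)$, together with $y-\Pi_{\cS}y\perp\cS$ and $\Pi_{\cS}y-z\in\cS$, gives by the Pythagorean identity
\begin{align*}
\|y-z\|^2=\|y-\Pi_{\cS}y\|^2+\|\Pi_{\cS}y-z\|^2\geq\|y-\Pi_{\cS}y\|^2.
\end{align*}
Conditioning on $Y=y$ and integrating, every feasible joint law therefore satisfies $\EE[\|Y-\hY\|^2]\geq\EE[\|Y-\Pi_{\cS}Y\|^2]$.

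Finally I would exhibit the minimizer. The deterministic coupling $\hY=\Pi_{\cS}Y$ is feasible, since its law is supported in $\cS$ while $Y$ retains marginal $\PP_Y$, and it attains the bound with equality because the slack term $\|\Pi_{\cS}Y-\hY\|^2$ vanishes. Hence the infimum equals $\EE[\|Y-\Pi_{\cS}Y\|^2]$ and is achieved by $\hY^*=\Pi_{\cS}Y$, which is exactly the projection of $Y$ onto $\cS$. No measurable-selection issue arises, as $\Pi_{\cS}$ is a fixed linear map and $\Pi_{\cS}Y$ is automatically measurable.

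I do not anticipate a substantive obstacle: the argument is the orthogonality principle for $L^2$ projection lifted to the level of distributions. The one point requiring care is the collapse of the double infimum — one must note that optimizing jointly over the coupling \emph{and} over the support-constrained marginal of $\hY$ is precisely what licenses the pointwise choice $\hY=\Pi_{\cS}Y$; were the marginal of $\hY$ fixed in advance, the projection coupling need not be admissible and the conclusion could fail.
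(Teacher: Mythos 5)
Your proof is correct and follows essentially the same route as the paper's: both rest on the pointwise Pythagorean decomposition $\|Y-\hY\|^2=\|Y-\Pi_{\cS}Y\|^2+\|\Pi_{\cS}Y-\hY\|^2$ (the paper writes $Y=Y_{\cS'}+Y_{\cS}$ and takes expectations) and then note the deterministic coupling $\hY=\Pi_{\cS}Y$ attains the bound. Your explicit collapse of the double infimum over marginal and coupling is a point the paper leaves implicit, but it is the same argument.
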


\begin{proof}
Let $Y=Y_{\cS'}+Y_{\cS}$ where $Y_{\cS}$ represents the projection of $Y$ onto the subspace $\cS$. Since
\begin{align}
\EE\left[\|Y-\hY \|^2\right]=\EE\left[\|Y_{\cS'}\|^2\right]+\EE\left[\|Y_{\cS}-\hY \|^2\right]
\end{align}
choosing $\hY=Y_{\cS}$ achieves the minimum of optimization \eqref{opt:proj}.
\end{proof}

Let $\cS$ be a fixed subspace of rank $r$ where $\hY$ lies on. According to Lemma \ref{lem:proj}, if $\hY$ is unconstrained, the optimal $\hY^*$ is the projection of $Y$ onto $\cS$ (i.e., $\hY^*=Y_{\cS}$). Moreover, since $Y$ is Gaussian, $\hY$ is also Gaussian. Therefore, there exists a linear $\bG(.)$ such that $\PP_{\hY^*}=\PP_{\bG(X)}$ where $X\sim \cN(\mathbf{0},\mathbf{I})$. Thus, the problem simplifies to choosing a subspace where $\EE\left[\|Y_{\cS}\|^2\right]$ is maximized, which is the same as the PCA optimization.

\subsection{Proof of Theorem \ref{thm:eq-quality}}
Let $\tY$ be a random variable whose distribution matches the empirical distribution $\QQ_Y^n$. Using \eqref{opt:dual-w2}, we can write:
\begin{align}\label{opt:disc-coupling}
W_2^2(\PP_{\tY},\PP_{\bG(X)})=&\max_{\psi(.) \text{:convex}} ~\EE[\|\tY\|^2]+\EE[\|\bG(X)\|^2] -2\EE\left[\psi(\tY)\right]-2\EE\left[\psi^*(\bG(X))\right].
\end{align}

By constraining the discriminator to convex quadratic functions, we obtain \eqref{opt:quadratic-GAN} (as we show in Section \ref{sec:qgan}). We also have the following lower bound:
\begin{align}\label{opt:constrained-quadratic-gan-emp}
&W_2^2(\PP_{\tY},\PP_{\bG(X)})>\\
&\max_{\psi(\by)=\by^t \bA \by/2, \bA \succeq 0} \EE[\|\tY\|^2]+\EE[\|\bG(X)\|^2] -2\EE\left[\psi(\tY)\right]-2\EE\left[\psi^*(\bG(X))\right]\nonumber\\
&=\sup_{\bA \succeq 0}~~ \Tr(\tilde{\bK})+\Tr(\bK_{\bG(X)})-\Tr(\bA \tilde{\bK})-\Tr(\bA^{\dagger} \bK_{\bG(X)})\nonumber\\
&=W_2^2(\PP_{Z},\PP_{\bG(X)}),\nonumber
\end{align}
where $\tilde{\bK}=\EE[\tY \tY^t]$ (the empirical covariance matrix) and $Z\sim \cN(\mathbf{0},\tilde{\bK})$. Therefore, the quadratic GAN solves the following optimization:
\begin{align}\label{opt:GAN-emp-const}
\min_{\bG(.)\in\cG}~ W_2^2 (\PP_Z,\PP_{\bG(X)}).
\end{align}
Using Theorem \ref{thm:PCA}, the optimal $\bG(X)$ to this problem is the empirical PCA solution, i.e. keeping the top $r$ principal components of the {\em empirical covariance matrix}. This completes the proof.

\subsection{Proof of Theorem \ref{thm:stability-1-1}}
Since $\bH$ is full-rank, we ignore the constraint $\range(\bG)= \range(\bH)$. We have
\begin{align}\label{eq:der-J}
\nabla_{\bG} J(\bG,\bH)&= 2 \left(\bI-\bA^{-1}\right)\bG\\
\nabla_{\bH} J(\bG,\bH)&= -2 \bH+2 \bA^{-1} \bH \bH^{-t},\nonumber
\end{align}
where $J(\bG,\bH)$ is the objective function of the quadratic GAN optimization \eqref{eq:quadratic-gan-objective}. Solving $\nabla_{\bG} J(\bG,\bH)=0$ and $\nabla_{\bH} J(\bG,\bH)=0$ lead to a unique solution of $\bU^*=\tbK=\bI$ and $\bA^*=\bI$ (note that $\bG^*$ is not unique and can be any orthogonal matrix).

To show that $(1,1)$-alternating gradient descent is globally convergent to this solution, we show that function $V(.,.)$ of \eqref{eq:lyp} is a Lyapunov function. First, note that this function is non-negative because it is the sum of two Von Neumann divergence functions. Also, at the solution, $V=0$.

Moreover, gradients of the function $V$ can be written as
\begin{align}\label{eq:der-V}
\nabla_{\bG} V(\bG,\bH)&= 2 \left(\bG-\bG^{-t}\right),\\
\nabla_{\bH} V(\bG,\bH)&= 2 \left(\bH-\bH^{-t}\right).\nonumber
\end{align}

Thus, we have
\begin{align}
&-\left<\nabla_{\bG} J, \nabla_{\bG} V\right>+\left<\nabla_{\bH} J, \nabla_{\bH} V\right>\\
&=\frac{1}{4}\Tr\Big[-\left(\bI-\bA^{-1}\right)\bG \left(\bG^t-\bG^{-1}\right)+\left(-\bH+\bA^{-1}\bU \bH^{-t}\right)\left(\bH^{t}-\bH^{-1}\right) \Big]\nonumber\\
&=\frac{1}{4}\Tr\Big[-\left(\bI-\bA^{-1}\right)\left(\bU-\bI\right)+\left(-\bA+\bA^{-1}\bU\right)\left(\bI-\bA^{-1}\right)\Big]\nonumber\\ 
&=-\frac{1}{4}\Tr\left[(\bI-\bA^{-1})^2 (\bU+\bA)\right]\leq 0,\nonumber
\end{align}
where the last step follows from the fact that for two PSD matrices $\bX$ and $\bY$, $\Tr[\bX \bY]\geq 0$. Therefore, $V(\bG,\bH)$ is a Lyapunov function. This completes the proof.

\subsection{Proof of Theorem \ref{thm:primal}}\label{proof:primal}
First, we prove part (a).  Recall the definition of $h(\bG)$ \eqref{eq:h-function}. Let $(\lambda_i,\bu_i)$ be the $i$-th eigenvalue/eigenvector pair of $\tbK$. For simplicity, we assume $\tbK$ does not have eigenvalue multiplicity.


Note that to have $\nabla_{\bg} h(\bg)=0$ in \eqref{eq:der-r=1}, we need to have $\bg=\sqrt{\lambda_j}\bu_j$ for $1\leq j\leq d$. Below we show that $\bg^*=\sqrt{\lambda_1}\bu_1$ is the only stable solution among these solutions. To do this, we compute the Hessian matrix of $h(\bg)$ as follows:
\begin{align}
\nabla_{\bg}^2 \bh(\bg)=2\bI-2\tbK^{\frac{1}{2}} \left(\frac{\bI}{\|\bg\|}-\frac{\bg\bg^t}{\|\bg^3\|} \right).
\end{align}
We evaluate the Hessian at $\bg=\sqrt{\lambda_j}\bu_j$ for $1\leq j\leq d$:

\begin{align}
\nabla_{\bg}^2 \bh(\bg=\sqrt{\lambda_j}\bu_j)= 2\bu_j \bu_j^t+2\sum_{i\neq j} \left(1-\sqrt{\frac{\lambda_i}{\lambda_j}}\right).
\end{align}
The Hessian matrix is PSD when $j=1$ and has a negative eigenvalue when $j>1$. This completes the proof of part (a).

Next, we prove part (b). First, we consider the case where $\bG$ is PSD and symmetric. In this case, the third term in \eqref{eq:h-function} is equal to $\tbK^{1/2}\bG$ which is linear in $\bG$. Thus, the function $h(\bG)$ is convex. We also have
\begin{align}
\nabla_{\bG} h(\bG)=2\bG-2\tbK^{\frac{1}{2}}.
\end{align}
Thus, if the initialization of $\bG$ is PSD, $\bG$ will remain PSD along the trajectory.

Finally, consider the case where $\bG$ is not PSD and has some negative eigenvalues. Since $h(.)$ depends on $\bG$ though $\bG\bG^t$, the landscape of regions where $\bG$ has negative eigenvalues is the same as the region where $\bG$ is PSD. This completes the proof of part (b).

\subsection{Proof of Theorem \ref{thm:convergance}}\label{proof:thm:convergence}
Consider the case where $\bK=s^2 \bI$ ($\bK$ is a diagonal matrix whose diagonal elements are equal to $s^2$). In this case, we solve the following optimization:

\begin{align}\label{opt:fit-gaussian-diag}
\min_{\bmu,s}~~ W_2^2 (\QQ_Y^n,\cN\left(\bmu,s^2\bI\right)).
\end{align}


Let $\by_1$,...,$\by_n$ be $n$ i.i.d. samples of $\PP_Y$. Let $\hat{\bmu}$ be the sample mean. Since $\PP_Y$ is absolutely continuous, the optimal $W_2$ coupling between $\QQ_Y^n$ and $\PP_Y$ is deterministic \cite{villani2008optimal}. Thus, every point $\by_i$ is coupled with an optimal transport vornoi region with the centroid $\bc_{y_i}^{(\bmu,\bK)}$. Therefore, we have
\begin{align}\label{eq:finite-n-decompos}
&W_2^2(\cN\left(\bmu,\bK\right),\QQ_Y^n)\\
&=\|\bmu\|^2+\Tr(\bK)+\frac{1}{N}\sum_{i=1}^{N}\|\by_i\|^2-\frac{2}{N} \sum_{i=1}^{N}\by_i^t \bc_{y_i}^{(\bmu,\bK)}\nonumber\\
&=\|\bmu\|^2+\Tr(\bK)+\frac{1}{N}\sum_{i=1}^{N}\|\by_i\|^2-\frac{2}{N} \sum_{i=1}^{N}\by_i^t \left(\bK^{1/2}\bc_{y_i}^{(\mathbf{0},\bI)}+\bmu\right)\nonumber\\
&=\|\bmu\|^2-2\bmu \hat{\bmu}+ \Tr(\bK)+\frac{1}{N}\sum_{i=1}^{N}\|\by_i\|^2-2 \Tr (\bK^{1/2} \bA)\nonumber
\end{align}
where
\begin{align}\label{eq:A}
\bA:=\frac{1}{N}\sum_{i=1}^{N} \bc_{y_i}^{(\mathbf{0},\bI)} \by_i^t.
\end{align}
The first step in \eqref{eq:finite-n-decompos} follows from the definition of $W_2$, the second step follows from the optimal coupling between $\cN(\bmu,s^2 \bI)$ and $\cN(\mathbf{0},\bI)$, and the third step follows from the matrix trace equalities.

Therefore,
\begin{align}
\bigtriangledown_{\bmu} W_2^2(\cN\left(\bmu,\bSigma\right),\QQ_Y^n)=2\bmu-2\hat{\bmu},
\end{align}
which leads to $\bmu^*_{n}=\hat{\bmu}$. Moreover, each component of the sample mean is distributed according to $\cN(\mathbf{0},1/n)$. Thus, $\|\bmu^*_N\|^2\sim \chi_d^2/n$ which goes to zero with the rate of $\tO(d/n)$.

Moreover, we have
\begin{align}
\bigtriangledown_{s} W_2^2(\cN\left(\bmu,\bK\right),\QQ_Y^n)=2sd-2\Tr(\bA).
\end{align}
Thus, $s^*=\Tr(\bA)/d$.

Furthermore, we have
\begin{align}
 W_2^2(\PP_Y,\QQ_Y^n)=d+\frac{1}{N}\sum_{i=1}^{N}\|\by_i\|^2-2\Tr(\bA),
\end{align}
which goes to zero with the rate of $\cO(n^{-2/d})$ \cite{canas2012learning}. Since $\frac{1}{N}\sum_{i=1}^{N}\|\by_i\|^2$ goes to $d$ with the rate of $\cO(\sqrt{d/n})$ (because it has a $\chi$-squared distribution), $\Tr(\bA)$ goes to $d$ with a rate of $\cO(n^{-2/d})$. Thus, $s^*$ goes to one with a rate of $\cO(n^{-2/d})$.

\subsection{Proof of Theorem \ref{thm:non-guassian}}
Let $Y=\bG^*(X)$. Thus, $F_{\bG^*}(Y)$ has a uniform distribution. From the inverse transform sampling, $F_{\bG}\circ F_{\bG^*}^{-1}(X)$ has a distribution same as $\PP_{\bG(X)}$. Using Lemma \ref{lem:det-coupling} completes the proof.

\end{appendices}

\end{document}